\pdfoutput=1
\documentclass{article}

\PassOptionsToPackage{numbers,compress}{natbib}
\usepackage[preprint]{neurips_2026}

\usepackage[utf8]{inputenc}
\usepackage[T1]{fontenc}
\usepackage{courier}
\usepackage[table,xcdraw,usenames,dvipsnames]{xcolor}
\usepackage{hyperref}
\usepackage{url}
\usepackage{booktabs}
\usepackage{amsfonts}
\usepackage{nicefrac}
\usepackage{microtype}
\usepackage{graphicx}
\usepackage{subcaption}
\usepackage{pifont}
\usepackage{dsfont}
\usepackage{enumitem}
\usepackage{wrapfig}
\usepackage{tabularx}
\usepackage{xspace}
\usepackage[most]{tcolorbox}
\usepackage{makecell}
\usepackage{multirow}
\usepackage{amsmath}
\usepackage{amssymb}
\usepackage{mathtools}
\usepackage{amsthm}
\usepackage[capitalize,noabbrev]{cleveref}

\usepackage{amsmath,amsfonts,bm}

\def\eqref#1{equation~\ref{#1}}
\def\1{\bm{1}}

\def\mA{{\bm{A}}}

\def\mH{{\bm{H}}}

\def\mO{{\bm{O}}}

\def\mZ{{\bm{Z}}}

\DeclareMathAlphabet{\mathsfit}{\encodingdefault}{\sfdefault}{m}{sl}
\SetMathAlphabet{\mathsfit}{bold}{\encodingdefault}{\sfdefault}{bx}{n}

\newcommand{\stitle}[1]{\vspace{1ex}\noindent{\bf #1}}

\newcommand{\ms}[2]{{#1\tiny{$\pm$#2}}}

\definecolor{mygrey}{gray}{0.4}

\newcommand{\pipeline}{\textit{Trajectory Graph Copilot}}
\newcommand{\ours}{\textit{Graph Debugger}}

\newcommand{\alfw}{AlfWorld}
\newcommand{\sciw}{ScienceWorld}
\newcommand{\textw}{TextWorld}
\newcommand{\trapl}{TravelPlanner}

\theoremstyle{plain}
\newtheorem{theorem}{Theorem}[section]

\theoremstyle{definition}

\newtheorem{assumption}[theorem]{Assumption}
\theoremstyle{remark}

\title{Leveraging Trajectory Graphs for Pre-Execution Error Diagnosis in Agentic LLM Systems}

\author{%
\textbf{Xu Zheng}$^{1}$ \quad
\textbf{Zhuomin Chen}$^{1}$ \quad
\textbf{Chaohao Lin}$^{1}$ \quad
\textbf{Hua Wei}$^{2}$ \\ 
\textbf{Haifeng Chen}$^{3}$ \quad
\textbf{Wei Cheng}$^{3}$ \quad
\textbf{Dongsheng Luo}$^{4}$\thanks{Corresponding author. This work was primarily conducted while he was at Florida International University.}\\
$^{1}$Florida International University, Miami, FL, USA\\
$^{2}$Arizona State University, Tempe, AZ, USA\\
$^{3}$NEC Laboratories America, Princeton, NJ, USA\\
$^{4}$Singapore Management University, Singapore\\
}

\begin{document}

\maketitle

\begin{abstract}
Large Language Model~(LLM)-based agents have demonstrated exceptional performance across a wide range of complex interactive tasks. However, they often struggle with long-horizon interactive tasks common in domains, such as embodied AI. The complexity and vast action spaces in these settings lead to compounding errors, where a single suboptimal action can derail an entire trajectory, causing the agent to exhaust its limited step budget on inefficient or unrecoverable paths. To overcome this without costly fine-tuning, we draw inspiration from software debugging, where execution logs are analyzed to preemptively catch errors. We propose {\pipeline}, a novel framework that acts as a ``copilot'' for LLM agents by diagnosing potential action errors before they are executed. At its core, {\ours} models historical trajectories as a probabilistic graph and uses a Graph Neural Network to identify sequential action patterns that frequently lead to failure. Functioning as a proactive diagnostic sandbox, our method provides early warnings on potentially flawed actions, prompting the agent to self-correct. This pre-action error diagnosis prevents costly mistakes, significantly enhancing the agent's ability to complete long-horizon tasks successfully.
The extensive experiments on four benchmarks with three LLM agents demonstrate a $14.69\%$ pass ratio improvement on average.
\end{abstract}

\section{Introduction}
Large Language Models (LLMs), such as ChatGPT~\cite{chatgpt}, Gemini~\cite{team2024gemini}, and Llama~\cite{touvron2023llama}, possess a remarkable capacity language comprehension and generation.
When equipped with tools~\cite{yang2023gpt4tools,wu2024autogen},
these LLMs become powerful agents capable of extraordinary performance in complex applications such as coding~\cite{islam2024mapcoder,qian2023chatdev,zhang2024codeagent}, scientific
reasoning~\cite{wang2022scienceworld}, and embodied artificial intelligence~(Embodied AI)~\cite{puig2018virtualhome,shridhar2020alfworld,ma2024survey}. These tasks often require agents to plan and execute long sequences of actions while interacting with an environment~\cite{li2022pre,xiong2024watch,li2024embodied,yang2025embodiedbench}. 

Despite their powerful reasoning capabilities, LLM agents often falter in complex and unfamiliar environments due to compounding errors~\cite{wang2024e2cl,xie2024revealing}. Prevailing strategies attempt to mitigate this through post-hoc learning, using methods like environmental data exploration~\cite{xiang2023language,xie2024revealing,song2024trial} or refinement learning~\cite{xiong2024watch,yuan2025agent,wang2025steca}. However, these approaches share a fundamental limitation. 
By learning primarily from the final outcomes of entire trajectories, they struggle to pinpoint the specific, step-level actions that lead to failure. % A binary success/failure signal
A reward for a trajectory provides a much weaker learning signal than feedback explaining why a particular action was incorrect (e.g., targeting a non-existent object or performing an invalid sequence). Without this granular, causal feedback, agents learn inefficiently from sparse signals and are prone to repeating similar mistakes.

To address this gap, we shift the paradigm from post-hoc trajectory analysis to proactive, step-level error diagnosis. We draw inspiration from software debugging. When a program fails, a developer uses a debugger to trace the execution, inspect the context, and pinpoint the exact line of code that caused the error. This pre-action, fine-grained analysis is far more effective than simply observing that the program crashed. This inspires our framework, {\pipeline}, which incorporates a graph-based diagnostic module, {\ours}. It acts as a ``debugger'' for the LLM agent, analyzing its intended action in the context of the recent past to flag potential errors before they are executed. 
This allows the agent to find potential errors early, preventing the accumulation of costly mistakes that would otherwise doom the entire task.

By framing agent execution within the Partially Observable Markov Decision Process (POMDP) paradigm~\cite{xiong2024watch,wang2025steca}, our method \ding{182} \textit{transforms historical trajectories into a probabilistic graph}. This structure encodes domain knowledge by capturing the underlying relationships between actions and observations~\cite{bishop2006pattern,koller2009probabilistic}. In our formulation, nodes represent actions and edges encode observations, which can be viewed as an adaptation of the classical state–transition diagram. Unlike the traditional methods, such as the retrieval-based methods~\cite{zhou2024trad}, our graph-based approach offers two key advantages: it \ding{183} \textit{provides higher performance} 
and \ding{184} \textit{requires fewer samples} to achieve the same level error detection rate. 
We detail these insights in Section~\ref{sec:theory}. Furthermore, instead of directly suggesting correct actions, {\ours} can \ding{185} \textit{serve as a diagnosis sandbox for the LLM-Agent decision making}. By providing only potential error warnings, our method encourages LLM agents to rely on their own reasoning. This minimizes the bias introduced by fine-tuning on a fixed dataset, ultimately leading to better generalization on new tasks.

To empirically verify the effectiveness of our framework, we conduct experiments on four benchmarks, including embodied AI environments and planning tasks. We begin by collecting trajectory datasets and annotating actions with corresponding labels, followed by conducting detection experiments on them. Compared with traditional text classification and LLM-based methods, {\ours} demonstrates a consistent advantage in step-level error detection. Moreover, we apply {\ours} as a diagnosis sandbox to provide error feedback in advance. By leveraging In-Context Learning (ICL) with feedback information, our method enhances LLM agents' pass ratio by $14.69\%$ on average, outperforming baselines. In summary, the contributions are as follows:
  \begin{itemize}[itemsep=1.5pt,topsep=0pt,parsep=0pt,leftmargin=*]%[leftmargin=*]
  \vspace{-0.1cm} 
  \item[\ding{72}] We introduce, {\ours}, a novel method based on a probabilistic graph model, to address the challenge of step-level action error detection in LLM agents.
  \item[\ding{72}] Our framework, {\pipeline}, integrates this error detection module as a distinct entity. By serving as a graph-based diagnostic module for LLM agents, our approach enhances their performance by providing real-time error warnings.
  \item[\ding{72}] Experiments conducted across multiple environments and LLM agents confirm the superiority of {\ours} in error detection and validate the effectiveness of the overall framework. 
  
  \end{itemize}

\section{Preliminaries}
\label{sec:preliminary}

\stitle{Long-Horizon Tasks as POMDPs.}
Following prior work, we model an LLM agent's interaction in a long-horizon task as a POMDP~\citep{pmlr-v202-carta23a,wang2025steca,song2024trial,xiong2024watch}. A POMDP is defined by the tuple $\mathcal{M} = (\mathcal{G},\mathcal{S},\mathcal{A},\mathcal{J},\mathcal{R},\mathcal{O},\gamma)$, where $\mathcal{G}$ is the goal space, $\mathcal{S}$ is the state space, $\mathcal{A}$ is the action space, $\mathcal{J}:\mathcal{S}\times\mathcal{A} \rightarrow \mathcal{S}$ is the state-transition function, $\mathcal{R}:\mathcal{S}\times\mathcal{A}\times\mathcal{G}\rightarrow \mathbb{R}$ is the reward function, $\mathcal{O}$ is the observation space, and $\gamma$ is the discount factor. In the text-based environments considered by this work, the spaces $\mathcal{G}$, $\mathcal{A}$, and $\mathcal{O}$ are subsets of natural language.

\stitle{Graphical Models for Sequential Decision Making.}
A Markov Decision Process can be conceptually viewed as a Probabilistic Graphical Model (PGM), where nodes represent states and edges represent transitions. This graphical perspective highlights the sequential dependencies inherent in agent trajectories. However, in a POMDP, the true state is latent, and the belief state (a probability distribution over states) is often high-dimensional and intractable to model explicitly, especially with language-based observations. Directly constructing and reasoning over a formal state-based PGM is therefore impractical. This motivates our work to develop a different, more practical graphical representation learned directly from trajectory data to diagnose action errors.

\stitle{Problem Formulation: Step-Level Error Diagnosis.}
\label{sec:problem}
Given a particular goal $g \in \mathcal{G}$, an LLM agent generates a trajectory history of alternating observations and actions, $(o_0, a_0, \dots, o_{t-1}, a_{t-1}, o_t)$. Before executing the next proposed action $a_t$, our goal is to determine if this action is productive or a potential error. To formalize what constitutes a ``good'' action, we move beyond simple binary 
success/failure signals. {Inspired by the benchmark AgentBoard~\citep{chang2024agentboard}, where the task is decomposed into subgoals, we conceptualize complex tasks as requiring the completion of several ordered milestones.} For example, in AlfWorld, the task ``clean a plate and put it on countertop'' involves a sequence of milestones, including collecting the plate, cleaning it, and placing it correctly. The quality of an action can therefore be evaluated based on the progress it makes toward the next milestone.  Figure~\ref{fig:pomdp} illustrates this concept by comparing an ideal Expert Trajectory, which progresses smoothly through milestone states, with an Agent Trajectory, which may deviate and follow a less optimal path.

To create an informative label space $\mathcal{Y}=\{1,...,C\}$, we define a set of fine-grained error categories that capture common failure modes in long-horizon tasks. These categories are not only inspired by task planning principles, using milestones to identify errors like \textit{Precondition Not Met (P.M.)} or \textit{Condition Met, Action Not Taken (C.M.)}, but also include general procedural errors such as \textit{Repeated Action (R.A.)} and \textit{Illegal Action (I.A.)}. This multi-faceted approach provides a rich, step-level diagnostic signal. The complete set of six error definitions is detailed in Appendix~\ref{sec:label_definition}. Our objective is to learn a diagnostic function $f$ that maps a trajectory history to a label for the \textit{current} proposed action:
$f: (o_0, a_0, \dots, o_t, a_t) \mapsto y_t, \quad \text{where } y_t \in \mathcal{Y}$.
This provides granular, step-level feedback that is far more informative than a sparse, trajectory-level reward.

\section{{\pipeline}}
In this section, we introduce a novel framework, {\pipeline} for LLM agents, which includes a graph-based error diagnosis module, {\ours}.
Initially, we transform text trajectories into graphs from the perspective of PGM. Subsequently, we adopt the idea of TextGCN~\citep{yao2019graph} 
\begin{wrapfigure}{r}{0.40\textwidth}
  \vspace{-0.3cm}
  \centering
  \includegraphics[width=0.40\textwidth]{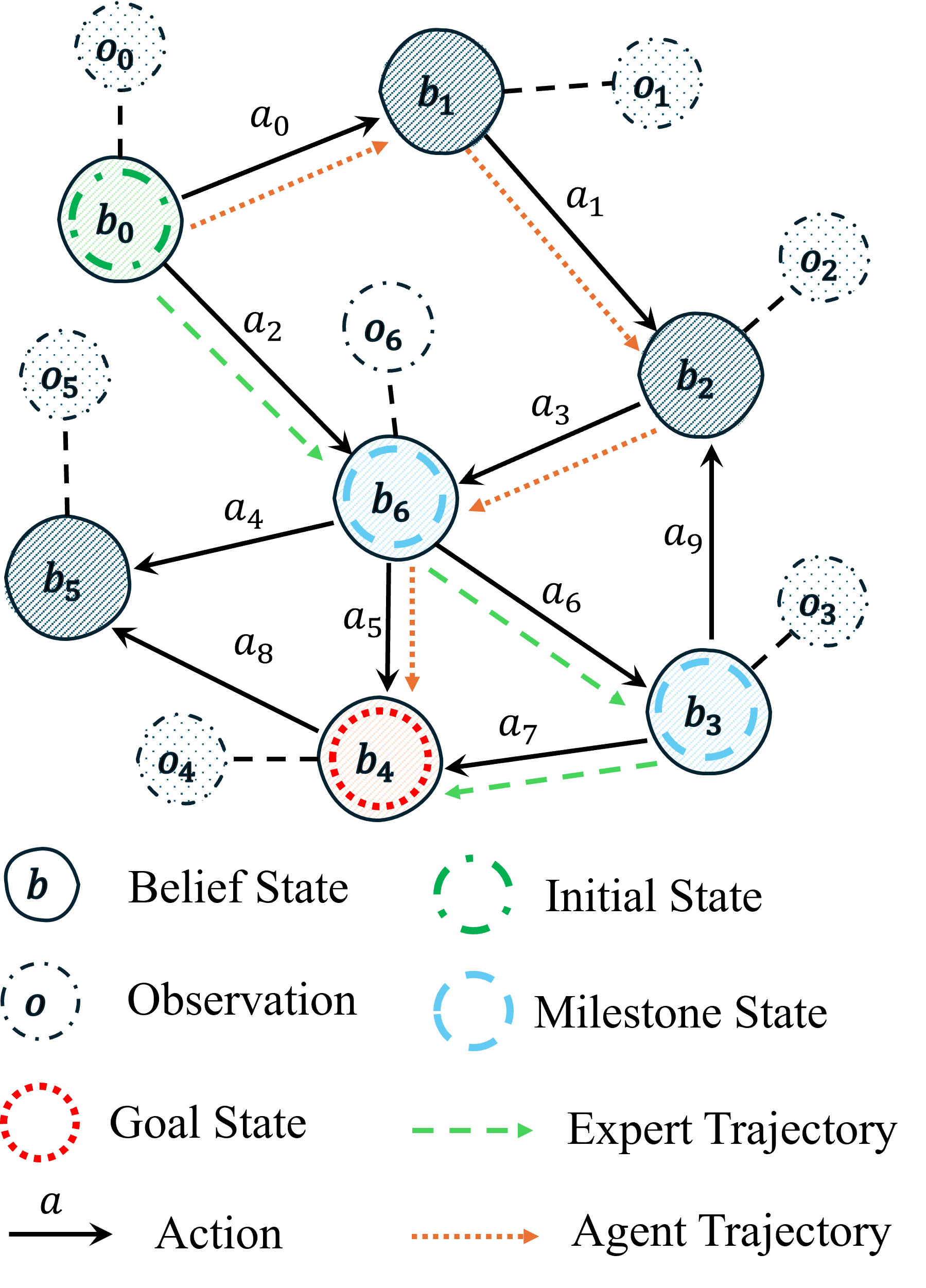}   
  \vspace{-0.4cm}
  \caption{A conceptual diagram of a POMDP illustrating our error diagnosis task. An expert follows an optimal trajectory from the Initial State ($b_0$) to the Goal State ($b_4$) by progressing through Milestone States ($b_6, b_3$). In contrast, the agent's trajectory is suboptimal.}
  \label{fig:pomdp}
  \vspace{-1.2cm}
\end{wrapfigure}
to regard the action error diagnosis as a node classification task. Finally, we apply {\ours} as a diagnosis sandbox in LLM agents for step-level action debugging and providing feedback for decision making. The overall framework is shown in Figure~\ref{fig:framework}.

\subsection{Graph Construction}
Building an accurate graph based on PGM presents two challenges: representing the dependency between states and actions, and accurately representing states in the graph. From Section~\ref{sec:preliminary}, the LLM agents' trajectories can be expressed as paths in a state transition graph. However, representing actions as edges duplicates the graph structure, reducing the overall information content, since different states may share the same action. 
To better extract the dependency between states and actions, a heterogeneous graph~\citep{sutton1998reinforcement}, where the nodes $\mathcal{V}$ comprise both states $\mathcal{B}$ and actions $\mathcal{A}$, offers a more effective and structured representation. {The heterogeneous graph enables node reduction by merging states with highly similar neighbors into a supernode, yielding a more generalized representation. However, due to the partial observation, it is hard to estimate the state in the graph.}
To address this issue, we consider using observations instead of states. In a POMDP, observations follow the conditional probability $p(o|s)$. Given the observation posterior probability $q(o)$ and $p(o|s)$, the state distribution can be inferred by $p(s) \varpropto \sum_{o}p(o|s)q(o)/\sum_{s'}p(o|s')$. Therefore, states are learned as implicit knowledge through observations. However, given that the observation results are in natural language and cannot be easily discretized, we integrate them as attributes within the edges to form an action-centric graph.

During implementation, to obtain a robust representation of nodes, we utilize a natural language processing tool, NLTK~\citep{bird2009natural}, {to finish some regular preprocessing, such as removing meaningless words/numbers.}  We then deduplicate actions to form a set of unique nodes and construct the PGM-based graph by linking them according to their order in the trajectories. Finally, we reform the state-transition diagram as an action-centric graph $G=(\mathcal{V},\mathcal{E})$, where $\mathcal{V} \subseteq \mathcal{A}$ and $\mathcal{E} \subseteq \mathcal{O}$. 

\begin{figure*}[t]
\centering
\includegraphics[width=1.00\textwidth]{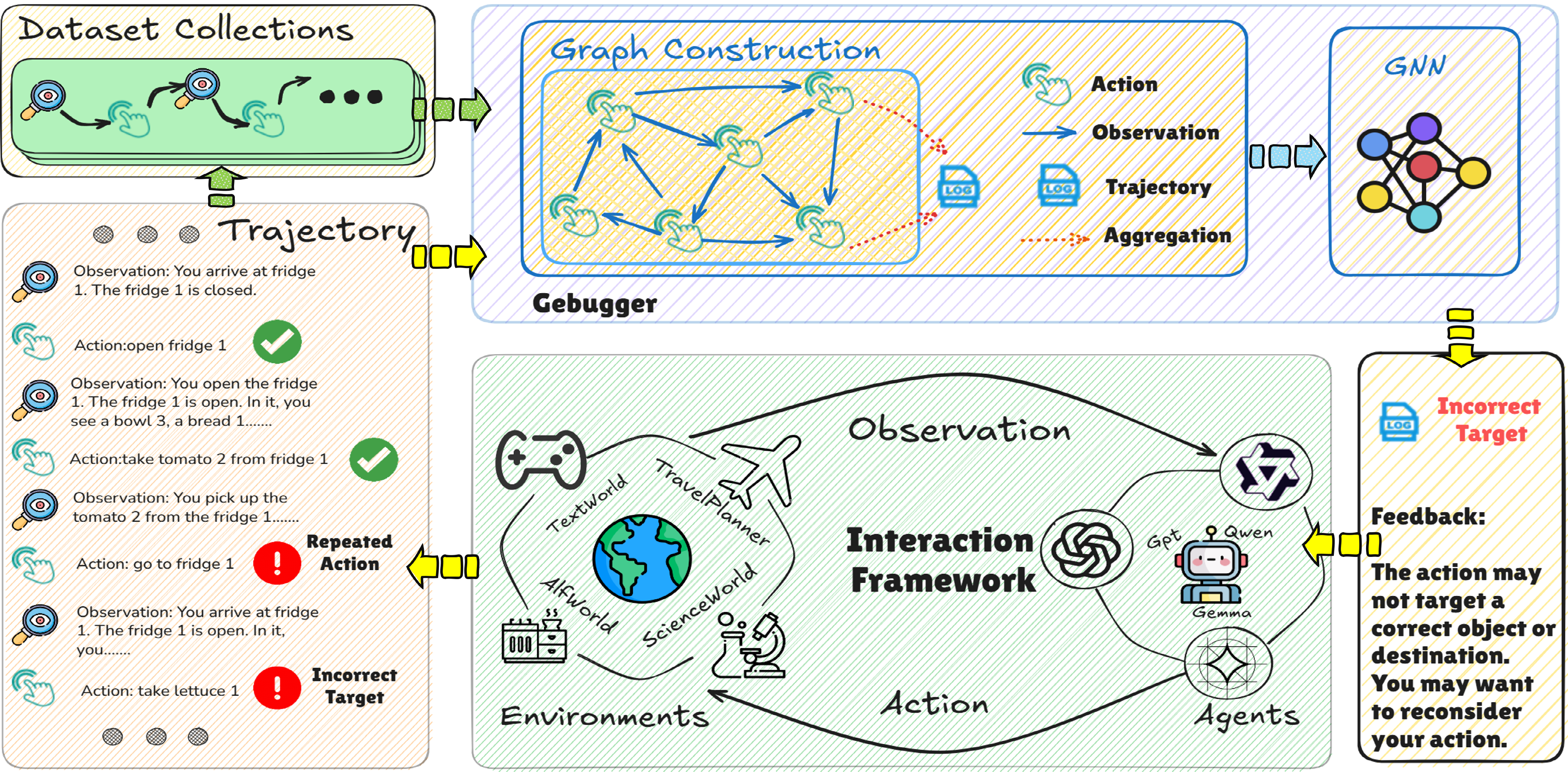}
\caption{Overall pipeline of {\pipeline}. We first collect the trajectories. {We then use the {\ours} to predict the action error labels, which includes converting trajectories into a PGM-based graph and utilizing the GNN-based detector. Finally, our framework provides external information for agent decision-making.}}
\label{fig:framework}
% \vspace{-0.5cm}
\end{figure*}

\subsection{Potential Error Diagnosis} 
To achieve a step-level action error detection, we adopt the TextGCN~\citep{yao2019graph} to regard trajectories as sentences. 
We use the pretrained Bert~\citep {devlin2019bert} model to initialize action and observation embeddings. By linking the trajectory nodes with action nodes, the error probe task is converted into a trajectory node classification task. Formally, the complete graph adjacency matrix $A$ is defined as:
\begin{align}
\label{eq:attr}
A_{ij} = \left \{ \begin{array}{cc}
1 & v_i,v_j \in \mathcal{A}, \text{and}~ [v_i,o,v_j] \in \xi  \\
1 & v_i \in \mathcal{A}, \text{and}~ v_j \in \mathcal{T}  \\
0 &  \text{otherwise}\\
\end{array} \right. ,
\end{align}
where $[v_i,o,v_j]$ is a subsequence of a trajectory $\xi$, {$\mathcal{T}$ is the set of trajectories} and $o$ is an observation. For edge attribution, we collect all the observations $\{o, o \in [a_i,o,a_j] \}$ linking the same action pair and get the average of the Bert embedding. The rest of the nodes and edges are set up with all zeros.

For potential error detection, we implement the detection mechanism outlined in the pipeline through a Graph Neural Network~(GNN) architecture. Given a trajectory $\xi$, our goal is to train a mapping function $f$ that probes if there is a potential error. This corresponds to the operation of the GNN detector across $K$ rounds of message passing. Each GNN layer performs one round of message passing, defined by the following update rule:
\begin{align*}
\label{eq:gnnupdate}
a_v^{(l)} &= \text{AGG}^{(l)}\left({h_u^{(l-1)}: u\in \mathcal{N}(v)}\right), \\
h_v^{(l)}  &= \text{COMBINE}^{(l)}\left(h_v^{(l-1)},a_v^{(l)}\right),
\end{align*}
where $a_v^{(l)}$ denotes the aggregated message at $l$-th layer, $h_v^{(l)}$ the feature vector of node $v$, and $\mathcal{N}(v)$ its set of neighbors. The $\text{AGG}^{(l)}$ is a function that aggregates information from neighboring nodes, while $\text{COMBINE}^{(l)}$ updates the node representations,{ following the definition in previous work~\citep{xu2018how}.} After $K$ rounds of updating, this process yields the final node embeddings $\mH$. For the error detection, we apply a softmax function to the final embeddings to obtain probabilities, $\mZ = \text{softmax}(\mH)$. The GNN model is trained by minimizing the cross-entropy loss on the labeled trajectories:
\begin{equation}
\mathcal{L} = - \sum_{t \in \mathcal{T}}\sum_{\ell \in \mathcal{Y}} Y_{t\ell}\ln Z_{t\ell},
\end{equation}
where $\mathcal{T}$ represents the train set of trajectories and $\mathcal{Y}$ is the label space. During testing, we inject the trajectory node into the existing graph constructed by the training set. To address potential issues where an action, observation, or task falls outside the defined scope, we use embeddings to search for and retrieve the corresponding trajectory path from the existing graph. We provide the detailed information in Appendix~\ref{sec:injection}.

\subsection{In-Context Learning Feedback}
To fully leverage the corrective signals provided by our external feedback module, we design a mechanism that integrates these signals into the agent's reasoning loop through ICL~\citep{liskavets2025prompt,zhang2022active,zhou2022large}. Rather than fine-tuning the underlying language model, our approach dynamically adapts the agent's behavior during inference by augmenting the prompt with structured feedback information. The detailed implementation is available in Appendix~\ref{sec:label_definition}.

Without feedback, the agent's generative behavior is modeled by the conditional distribution $P_{agent}(A_{i+1}|(O_0,A_0,...O_{i},A_{i}))$. With the environment copilot feedback, it first evaluates the $A_{i+1}$ initialized by the agent and generates the feedback signal $Y_{i}$, where the signal space is defined in Section~\ref{sec:preliminary}. If $Y_{i}$ indicate there is potential error, then the agent should regenerate $A'_{i+1}$ according to the revised conditional distribution $P_{agent}(A'_{i+1}|(O_0,A_0,...O_{i}, A_{i}), A_{i+1}, Y_{i})$. The revised conditional distribution can be viewed as the agent's posterior over actions. In practice, ICL operationalizes this posterior update by conditioning the model on explicit feedback text in the prompt.

\section{Theoretical Analysis}
\label{sec:theory}
One goal of this work is to detect the potential error based on the trajectory. A straightforward approach is to directly retrieve historical trajectories to determine if the current action is erroneous. A key challenge is managing the rapidly growing size of the trajectories as the number of interactions with the environment increases. 
Alternatively, because the action and observation modalities are consistent, the problem of error detection can be formulated as a text sequence classification task.
Prior works~\citep{taha2024comprehensive,li2022survey} have explored sequence-based modeling approaches for the task, such as Bert-based detectors~\citep{devlin2019bert}. 
Despite the promise, the structured information hidden in the trajectories is not well-explored. Drawing inspiration from PGM, we formulate the action error probe task as a node classification on a converted graph. Building upon this foundation, we show that graph-based methods have a lower Bayes risk and sample complexity than sequence-based methods under the same generalization error. 

To better serve theory analysis, we reorganize the trajectory as a tuple $X=(\tau, G, \mO_{1:k}, \mA_{1:k})$, where $G$ is the task, $\tau$ is the trajectory identifier, $O_i, A_i$ are the $i$-th observation and action for $i=1,...k$, and $k$ is the number of action/observation in trajectory. The decision target is the ground truth class $Y \in \mathcal{Y}$ of the final action.
We consider a class of baselines, the \textit{empirical sequence-based} (ESB) method, which captures the critical characteristics of existing approaches. The classical ESB method, such as a fine-tuned Bert classification model, contains two modules, a sequence representation mapping $U=\Phi_{seq}(X) \in \mathcal{U} \subseteq \mathbb{R}^{d_{seq}}$ and a classifier $h_{seq}: \mathcal{U} \to \mathcal{Y}$. 
In this work, the graph representation is $S=\Phi_{g}(X)\in \mathcal{S} \subseteq \mathbb{R}^{d_g}$ obtained by \textit{probabilistic graph-based} (PGB) approach, where $S$ is chosen to be the Markov blanket~\citep{pearl1998graphical} of $Y$ in the graph induced by the connectivity rules. 
Similarly, a classifier $h_{g}$ is learned from $(S, Y)$ pairs. To better compare ESB and PGB, 
we first make two conventional assumptions as follows.
\begin{assumption}[\textbf{Conditional Sufficiency}]
\label{assump:A1}
There exists a representation \(S=\mathrm{MB}_G(Y)\) (the Markov blanket of \(Y\) under the graph construction) such that $ Y \ \perp\!\!\!\perp\  X\setminus S \ \big| \ S $. 
\end{assumption}
This assumption indicates that the label of an action is determined in limited steps and dependent on other steps, which is aligned with the Markov decision process. {In the Appendix~\ref{sec:label_definition}, we provide an example to illustrate why this assumption holds.} In Section~\ref{sec:problem}, we follow the assumption to use the milestone states to refer to the action labels.  

\begin{theorem}[\textbf{Bayes Risk Ordering and Sample Complexity}]
    \label{th:1}
For any measurable classifier $h$ and representation $\Phi$, define the risk as $R(h \circ \Phi) \coloneqq \mathbb{P}(h(\Phi(X)) \neq Y)$, and let $R^*(\cdot)$ denote the corresponding Bayes (irreducible) risk. Among all representations, the sufficient statistic $S$ is optimal: its Bayes risk equals that of the full input $X$. For any other representation $U = \Phi(X)$, we have
\begin{equation}
    R^*(S) \le R^*(U),
\end{equation}
where the inequality is strict whenever $I(Y;S) > I(Y;U)$. This ordering of risks translates directly to sample complexity: to achieve classification error $\epsilon$ under the same error tolerance, the graph and sequential representations require $m_{\mathrm{g}}$ and $m_{\mathrm{seq}}$ samples, respectively, satisfying $m_{\mathrm{g}} < m_{\mathrm{seq}}$.

\end{theorem}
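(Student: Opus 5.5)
The plan is to prove the three claims in turn: $S$ matches the Bayes risk of $X$; every other representation $U=\Phi(X)$ is no better than $S$, strictly worse under the information gap; and the risk ordering translates into a sample-complexity ordering. Throughout, write $\eta_Z(y)=\mathbb{P}(Y=y\mid Z)$ for the posterior given a random variable $Z$. Recall that the Bayes risk of $Z$ is $R^*(Z)=1-\mathbb{E}[\max_y \eta_Z(y)]$.

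\emph{Step 1: $R^*(S)=R^*(X)$.} Assumption~\ref{assump:A1} gives $Y\perp\!\!\!\perp X\setminus S\mid S$. Since $S=\Phi_g(X)$ is a function of $X$, this yields $\eta_X=\eta_S$ almost surely, so the two Bayes risks coincide.

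\emph{Step 2: $R^*(S)\le R^*(U)$.} Because $U=\Phi(X)$ is $X$-measurable, the tower property gives $\eta_U=\mathbb{E}[\eta_X\mid U]$. Convexity of $p\mapsto\max_y p(y)$ and Jensen's inequality then give
\[
\mathbb{E}[\max_y\eta_U(y)]\le\mathbb{E}[\max_y\eta_X(y)].
\]
Hence $R^*(U)\ge R^*(X)=R^*(S)$.

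\emph{Strictness.} Since $U$ is a function of $X$ and $S$ is sufficient, the data-processing inequality gives $I(Y;U)\le I(Y;X)=I(Y;S)$. If $I(Y;S)>I(Y;U)$, then $\eta_U\neq\eta_X$ on a set of positive probability. To turn this into a strict risk gap, I will use that Jensen's inequality for $\max$ is strict exactly when, with positive probability, the conditional law of $\eta_X$ given $U$ puts mass on posteriors with different argmax labels. This is the main obstacle. Unequal posteriors alone do not force a strict 0-1 risk gap; for example, the posteriors can differ while sharing the same argmax. I plan to close the gap with a mild non-degeneracy condition: the information lost by $U$ changes the Bayes decision on a set of positive measure. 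I will state this condition explicitly as the interpretation of ``whenever $I(Y;S)>I(Y;U)$''. As a fallback, I will prove strictness for the log-loss Bayes risk, where the gap is exactly $H(Y\mid U)-H(Y\mid S)=I(Y;S)-I(Y;U)>0$.

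\emph{Step 3: sample complexity.} I will use a standard agnostic PAC bound for each pipeline. With probability $1-\delta$, the excess risk of ERM over a hypothesis class $\mathcal{H}$ on representation $Z$ is $O\big(\sqrt{(d_{\mathcal{H}}+\log(1/\delta))/m}\big)$. To reach total error $\epsilon$ one needs
\[
m_Z \gtrsim \frac{d_{\mathcal{H}}+\log(1/\delta)}{(\epsilon-R^*(Z))^2}.
\]
With the same hypothesis-class capacity and the same tolerance $\delta$ (the ``same error tolerance'' clause), the gap ordering $\epsilon-R^*(S)>\epsilon-R^*(U)$ gives $m_g<m_{seq}$ whenever the risk ordering from Step~2 is strict. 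I will additionally note that $S$, being a Markov blanket of bounded size, typically has lower capacity, which only strengthens the conclusion. The hypotheses on comparable capacity must be stated, since the bound compares upper bounds rather than true minimal sample sizes.
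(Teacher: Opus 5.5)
Your Step~1 matches the paper's steps (1)--(2). After that your route differs from the paper's, and it is the more rigorous one.

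\textbf{The risk inequality.} You prove $R^*(S)\le R^*(U)$ from $\eta_U=\mathbb{E}[\eta_X\mid U]$ and $\max_y\mathbb{E}[\eta_X(y)\mid U]\le\mathbb{E}[\max_y\eta_X(y)\mid U]$. The paper instead uses the data-processing inequality $I(Y;U)\le I(Y;X)=I(Y;S)$. Its only link from that to the risks is that a larger $I(Y;\cdot)$ gives a smaller Fano-type \emph{lower bound} on error. Ordering lower bounds does not order the risks, so your tower/Jensen argument is what actually establishes the inequality.

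Your doubt about strictness is justified: for the 0-1 risk defined in the theorem, the strict claim is false as stated. Take $Y\in\{0,1\}$, $X$ uniform on two values with $\mathbb{P}(Y=1\mid X)\in\{0.6,0.9\}$, $S=X$, and $U$ constant. Then $I(Y;S)>0=I(Y;U)$, yet $R^*(S)=R^*(U)=1/4$. Your two repairs are therefore necessary amendments of the statement, not weaknesses of your proof:
\begin{itemize}
\item the non-degeneracy hypothesis, which you should phrase to handle ties: with positive probability, no single label maximizes $\eta_X$ almost surely on the fiber of $U$;
\item the log-loss version, whose gap is exactly $I(Y;S)-I(Y;U)$.
\end{itemize}

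\textbf{Sample complexity.} The paper combines Fano's inequality with $I(Y;S_m)\le m\,I(Y;X)$ to get necessary-sample lower bounds $m\gtrsim\bigl((1-\epsilon)\log C-1\bigr)/I(Y;\cdot)$, then compares denominators. This ties the ordering to the information gap alone, so no strict risk gap is needed. However, the subadditivity step requires the samples to be conditionally independent given $Y$, not merely independent.

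You instead compare sufficient-sample PAC upper bounds driven by $\epsilon-R^*(Z)$. Your strict conclusion therefore inherits the non-degeneracy condition. Beyond equal capacity, you need one more hypothesis. Agnostic ERM bounds control excess risk over $\inf_{h\in\mathcal{H}}\mathbb{P}(h(Z)\neq Y)$, not over $R^*(Z)$. So the Bayes rule on each representation must lie in, or be approximable within, its hypothesis class.

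With these hypotheses made explicit, your argument is a sound proof of a suitably qualified theorem. Neither your route nor the paper's orders the true minimal sample sizes; both compare bounds.
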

The theorem demonstrates that the graph-based method has a lower Bayes risk and could achieve superior performance compared to the ESB method under identical conditions. 
The detailed proof is provided in Appendix~\ref{sec:proof}.

\begin{table*}[t]
\caption{Action error detection results(\%), the metric is the accuracy($\uparrow$). GPT4o. is short for GPT4o-mini. The best performances are in \textcolor{red}{\textbf{bold}}, and the second-best method is \textcolor{blue}{\underline{underlined}}.}
\label{tab:detection}
\centering
\resizebox{\textwidth}{!}{\begin{tabular}{l@{\hspace{0.20cm}}c@{\hspace{0.2cm}}c@{\hspace{0.10cm}}c@{\hspace{0.2cm}}c@{\hspace{0.2cm}}c@{\hspace{0.1cm}}c@{\hspace{0.2cm}}c@{\hspace{0.2cm}}c@{\hspace{0.1cm}}c@{\hspace{0.2cm}}c@{\hspace{0.2cm}}c@{\hspace{0.2cm}}c@{\hspace{0.2cm}}c@{\hspace{0.2cm}}} 
    \Xhline{1.2pt}
    \textbf{}  & \multicolumn{3}{c}{\textbf{AlfWorld}} & \multicolumn{3}{c}{\textbf{TextWorld}} & \multicolumn{3}{c}{\textbf{ScienceWorld}} & \multicolumn{3}{c}{\textbf{TravelPlanner}}\\
    \cmidrule(lr){2-4} \cmidrule(lr){5-7} \cmidrule(lr){8-10} \cmidrule(lr){11-13}
    Method  & GPT4o. & Qwen2.5 & Gemma3 & GPT4o. & Qwen2.5 & Gemma3 & GPT4o. & Qwen2.5 & Gemma3 & GPT4o. & Qwen2.5 & Gemma3 \\
    \Xhline{1pt}
    \rowcolor{gray!30} \multicolumn{13}{c}{\textbf{Text Classification}} \\
    TF-IDF & \textcolor{blue}{\underline{58.04}}	&	\textcolor{blue}{\underline{66.98}}	&	\textcolor{blue}{\underline{63.75}}		&	63.68	&	49.82	&	42.09		&	81.02	&	\textcolor{blue}{\underline{80.66}}	&	\textcolor{blue}{\underline{80.52}}		&	\textcolor{red}{\textbf{93.35}}	&	65.23	&	\textcolor{blue}{\underline{65.22}} \\
    Bert   & 53.09	&	62.07	&	55.85		&	\textcolor{blue}{\underline{64.17}}	&	\textcolor{blue}{\underline{54.12}}	&	49.44		&	\textcolor{blue}{\underline{83.51}}	&	80.48	&	78.72		&  92.14	&	\textcolor{blue}{\underline{65.76}}	&	64.63 \\ 
    \rowcolor{gray!30} \multicolumn{13}{c}{\textbf{Retrieve}}\\
    MiniLM & 39.33	&	46.74	&	49.74		&	59.37	&	43.55	&	42.37		&	73.87	&	76.57	&	73.76		&	90.20	&	60.56	&	58.79 \\
    E5     & 44.82	&	57.51	&	53.14		&	57.05	&	27.24	&	52.54		&	76.59	&	74.48	&	72.98		&	90.88	&	58.22	&	59.23 \\
    GTR    & 48.10	&	58.41	&	54.00		&	60.20	&	30.47	&	53.67		&	76.17	&	74.74	&	74.00		&	90.09	&	59.77	&	59.30 \\
    \rowcolor{gray!30} \multicolumn{13}{c}{\textbf{RAG}} \\
    GPT4o  & 55.18	&	62.73	&	60.38		&	52.74	&	49.10	&	\textcolor{red}{\textbf{69.21}}		&	72.15	&	74.21	&	68.76		&	79.87	&	56.82	&	55.72 \\
    Gemma3 & 55.74	&	64.43	&	59.95		&	61.69	&	51.25	&	60.17		&	77.11	&	80.75	&	76.59		&	34.95	&	39.81	&	39.28 \\
    Qwen2.5 & 50.04	&	57.56	&	55.32		&	43.12	&	35.84	&	51.69		&	67.94	&	71.05	&	66.30		&	24.25	&	31.22	&	31.29 \\
    \rowcolor{gray!30} \multicolumn{13}{c}{\textbf{LLM Zero-Shot}} \\
    GPT4o  & 32.12	&	30.13	&	32.06		&	45.77	&	42.29	&	48.31		&	33.33	&	35.30	&	31.45		&	4.21	&	8.68	&	8.03 \\
    Gemma3 & 31.04	&	28.92	&	33.81		&	48.26	&	34.59	&	44.35		&	18.81	&	34.42	&	20.30		&	2.42	&	4.48	&	4.03 \\
    Qwen2.5& 16.46	&	20.24	&	16.75		&	39.64	&	27.78	&	31.64		&	19.76	&	19.79	&	20.83		&	2.25	&	10.86	&	11.46 \\
    \rowcolor{gray!30} \multicolumn{13}{c}{\textbf{LLM One-Shot}} \\
    GPT4o  & 22.34	&	21.81	&	28.22		&	51.41	&	37.46	&	55.65		&	13.49	&	13.43	&	11.03		&	4.32	&	8.89	&	7.96 \\
    Gemma3 & 27.30	&	20.99	&	32.39		&	42.45	&	32.62	&	37.85		&	9.78	&	21.52	&	14.31		&	2.55	&	4.46	&	4.19 \\
    Qwen2.5& 17.12	&	17.59	&	17.05		&	40.30	&	33.15	&	38.98		&	31.12	&	17.96	&	21.44		&	2.58	&	11.19	&	11.55 \\
    \rowcolor{gray!30} \multicolumn{13}{c}{\textbf{LLM Three-Shot}} \\
    GPT4o  & 29.13	&	20.65	&	33.51		&	50.41	&	35.13	&	49.72		&	15.85	&	15.03	&	11.77		&	3.46	&	2.48	&	4.42 \\
    Gemma3 & 30.21	&	26.29	&	27.26		&	38.47	&	34.77	&	33.33		&	29.23	&	50.91	&	20.83		&	2.27	&	5.59	&	8.73 \\
    Qwen2.5& 16.33	&	19.29	&	16.59		&	44.28	&	26.70	&	37.85		&	28.54	&	31.97	&	22.63		&	0.95	&	5.50	&	5.57 \\
    \hline
\rowcolor{gray!10} 
    Ours  & \textcolor{red}{\textbf{63.98}}   &	\textcolor{red}{\textbf{69.89}}	&	\textcolor{red}{\textbf{66.85}}		&	\textcolor{red}{\textbf{67.00}}	&	\textcolor{red}{\textbf{62.19}}	&	\textcolor{blue}{\underline{66.67}}		&	\textcolor{red}{\textbf{87.84}}	&	\textcolor{red}{\textbf{84.57}}	&	\textcolor{red}{\textbf{83.39}}		& \textcolor{blue}{\underline{92.75}}	&	\textcolor{red}{\textbf{67.43}}	&	\textcolor{red}{\textbf{68.15}} \\
    \Xhline{1.2pt}
\end{tabular}
}
\end{table*}

\section{Experiments}
To verify the effectiveness of our framework, we conduct empirical studies on two perspectives: error detection and feedback evaluation. We first collect and build datasets on four environments, and then we compare the performance on our framework and baselines.

\textbf{Datasets \& Benchmarks.} Four environments are used for dataset construction, including {\alfw}~\citep{shridhar2020alfworld}, {\textw}~\citep{cote18textworld,jansen2022textworldexpress}, {\sciw}~\citep{wang2022scienceworld}, and {\trapl}~\citep{xie2024travelplanner}. These benchmarks evaluate long-term reasoning by requiring agents to interact with the environment, gather information, and make decisions. For instance, ScienceWorld evaluates an agent’s ability to programmatically solve problems using scientific knowledge, while {\trapl} tests its capacity to use tools for information gathering and planning under user constraints.
We follow the React~\citep{yao2023react} to implement a vanilla agent for the data collection and experiments. In AlfWorld, we follow Agentboard to split the train, validation, and test tasks. In TextWorld and ScienceWorld, we use the first 10 variants as training and validation tasks, and 5 variants as test tasks.  In TravelPlanner, we select 100 tasks for each difficult level as test tasks and reserve the others for training and validation tasks. To remove the bias of LLMs, we use GPT4o-mini~\citep{openai_gpt4o_mini}, Qwen2.5-14B~(Qwen2.5)~\citep{yang2025qwen3}, and Gemma3-27B~(Gemma3)~\citep{team2025gemma} for all four environments.
For the step-level annotation, we first employ LLM models to generate the label, and then a rule-based script is used to select and filter. The detailed dataset information is available in Appendix~\ref{sec:dataset}.

\textbf{Evaluation Metrics.}
For error detection, we use the classical metric, accuracy, to measure the performance. In the feedback evaluation, we report two metrics: Pass Ratio~(PR) and Ground Ratio~(GR).  PR is used to evaluate whether an agent has successfully completed a given task, and GR is a metric that assesses the validity of an agent's action within a given environment state, serving as an indicator of its grounding and understanding. In TravelPlanner, the PR indicates whether the agents give the final plan. In practice, we are primarily concerned with the PR. We mainly provide the analysis of GR in the Appendix.

\textbf{Baselines.}
Our baselines contain three categories of approaches. The conventional methods include TF-IDF~\citep{salton1988term} and fine-tuned Bert methods~\citep{devlin2019bert}, which are represent the text classification approaches. We use TF-IDF to extract the feature and logistic regression to predict the error probability.  
We also use retrieval-based~\citep{guu2020retrieval} and Retrieval-Augmented Generation~(RAG)~\citep{lewis2020retrieval} methods as representative approaches for incorporating external databases. In retrieval-based methods, we use embedding to find the most similar trajectory. We use three language models to obtain the embeddings, including ALL-MiniLM-L6-v2~(MiniLM)~\citep{minilmv2}, E5-Large~(E5)~\citep{wang2022text}, and GTR-T5-Large~(GTR)~\citep{ni2022large}. For RAG, we select the five most similar trajectories as candidates and use LLMs to generate the answer. We use the GTR as the embedding model to measure the similarity.
Furthermore, we consider the LLM-as-a-judge~\citep{zheng2023judging} methods as the LLM-based methods, including three settings: zero-shot, one-shot, and three-shot. To avoid the model bias, we use three LLM models for RAG and LLM-as-judge: GPT4o~\citep{openai2024gpt4ocard}, Qwen2.5-14B~(Qwen2.5), and Gemma3-27B~(Gemma3).

\subsection{Error Probe Performance Analysis}
\label{subsec:dectection}
\stitle{Detection Results.} In Table~\ref{tab:detection}, we report the results across four benchmark datasets. Overall, our method mostly outperforms baselines. Specifically, the graph-based detection approach achieves over a $5\%$ improvement compared to text classification methods on average. 
The advantage is most striking in the AlfWorld environment, where the agent is built with GPT4o-mini, and our method achieves a $10.23\%$ increase over the second-best approach. 
This observation aligns with our analysis that graph-based methods require lower data complexity than sequence-based approaches, making them more effective in handling sparse or noisy trajectories.

We also observe interesting differences across RAG methods. In the first three environments, RAG methods outperform standard retrieval approaches, whereas in TravelPlanner, the opposite holds. We attribute this reversal to the much longer observation sequences in TravelPlanner, which may dilute the benefits of RAG and make simple retrieval more effective. Meanwhile, LLM-as-judge methods consistently perform poorly. We believe this is due to the inherent difficulty of the tasks, which demand strong logical reasoning skills beyond the capabilities of current judgment-style approaches. Finally,
\begin{wraptable}{r}{0.50\textwidth}
\caption{{Statistical significance analysis of action error detection~(Micro F1) on TextWorld.The best performances are in \textcolor{red}{\textbf{bold}}, and the second-best method is \textcolor{blue}{\underline{underlined}}.} }
\centering
\scalebox{0.85}{ \begin{tabular}{lccc} 
    \Xhline{1.2pt}
    {Method}  & {GPT4o.} & {Qwen2.5} & {Gemma3} \\ 
    \Xhline{1pt}
    \rowcolor{gray!30} \multicolumn{4}{c}{{\textbf{Text Classification}}} \\
    {TF-IDF} & \ms{{0.6365}}{{0.0016}} & \ms{\textcolor{blue}{\underline{0.4996}}}{{0.0041}} & \ms{{0.4045}}{{0.0160}}  \\
    {Bert}   & \ms{\textcolor{blue}{\underline{0.6368}}}{{0.0097}} & \ms{\textcolor{blue}{\underline{0.4996}}}{{0.0501}} & \ms{{0.4345}}{{0.0493}}  \\
    \rowcolor{gray!30} \multicolumn{4}{c}{{\textbf{Retrieval}}}\\
    {GTR}    &  \ms{{0.5322}}{{0.0073}} & \ms{{0.3018}}{{0.0009}} & \ms{{0.5322}}{{0.0073}}  \\
    \rowcolor{gray!30} \multicolumn{4}{c}{{\textbf{RAG}}} \\
    {Gemma3} &  \ms{{0.6232}}{{0.0030}} & \ms{{0.5183}}{{0.0066}} & \ms{\textcolor{blue}{\underline{0.6124}}}{{0.0072}} \\
    % \hline
    \rowcolor{gray!30} \multicolumn{4}{c}{{\textbf{LLM Zero-Shot}}} \\
    {Gemma3} & \ms{{0.4732}}{{0.0030}} & \ms{{0.3427}}{{0.0029}} & \ms{{0.4492}}{{0.0000}} \\
    \rowcolor{gray!30} \multicolumn{4}{c}{{\textbf{LLM One-Shot}}} \\
    {Gemma3} & \ms{{0.3755}}{{0.0755}} & \ms{{0.2620}}{{0.0741}} & \ms{{0.3090}}{{0.0554}} \\
    \rowcolor{gray!30} \multicolumn{4}{c}{{\textbf{LLM Three-Shot}}} \\
    {Gemma3} & \ms{{0.1158}}{{0.0457}} & \ms{{0.3194}}{{0.0505}} & \ms{{0.1876}}{{0.0907}} \\
    \hline
    \rowcolor{gray!30} {Ours} & \ms{\textcolor{red}{\textbf{0.6872}}}{{0.0091}} & \ms{\textcolor{red}{\textbf{0.5810}}}{{0.0133}} & \ms{\textcolor{red}{\textbf{0.6316}}}{{0.0229}}  \\
    \Xhline{1.2pt}
\end{tabular}
}
\label{tab:det_robustness}
\vspace{-0.4cm}
\end{wraptable}
we find no consistent trend between one-shot and three-shot settings. Surprisingly, in most cases, the zero-shot setting achieves better performance than either, likely because additional samples introduce bias that hinders the reasoning ability of the LLMs. These findings further highlight the robustness of our graph-based detection approach across different environments and prompting strategies.

\stitle{Statistical Significance Analysis.} To further evaluate the robustness of error detection performance, we conduct the statistical significance analysis on the TextWorld benchmark with three LLMs. We use Gemma3 for the LLM zero-shot, one-shot, and three-shot methods. As shown in Table~\ref{tab:det_robustness}, our method consistently outperforms the baselines, demonstrating the advantage of the action-centric graph. We further provide macro F1 results in Appendix~\ref{subsec:error_detect}.

\subsection{Feedback Evaluation Results}
\label{subsec:feedback}
\stitle{Main Results.} To evaluate the effectiveness of {\pipeline}, we conduct feedback evaluation experiments. We compare against four baselines: TF-IDF, GTR, Gemma3, and GPT4o, representing text-classification, retrieval-based, RAG-based, and LLM-as-judge (zero-shot) categories, respectively. The detailed experimental setup is provided in Appendix~\ref{subsec:pipeline}. 
We report the average results on four benchmarks and three LLM agents in Figure~\ref{fig:feedback_results}. As the results show,
both our method and baselines outperform the vanilla results in most cases, demonstrating that the mechanism, 
\begin{wraptable}{r}{0.50\textwidth}
\centering
\caption{Performance comparison~(Pass Ratio(\%)) of different methods with two open-source LLMs.The best performances are in \textcolor{red}{\textbf{bold}}.}
\label{tab:eval_results}
\begin{tabular}{lcccc}
\toprule
 & {ReAct} & {Reflexion} & {Ours} \\
\midrule
{Qwen2.5} & 62.50  & 60.00 & \textcolor{red}{\textbf{65.00}} \\
{Gemma3}  & 70.00 & 67.50  & \textcolor{red}{\textbf{75.00}} \\ 
\bottomrule
\end{tabular}
\end{wraptable}
feedback in the interaction consistently enhances agent performance across different detection strategies. Among these methods, {\pipeline} achieves the strongest performance on average, which outperforms baselines on all four benchmarks. On average, our method improves the vanilla by $14.69\%$. 
Interestingly, even for RAG and LLM-as-judge, which show weaker detection accuracy, the agents still benefit from performance gains. For example, in ScienceWorld, LLM-as-Judge methods have a poor detection ratio but still boost the agents. A possible explanation is that LLMs may not always follow instructions precisely, leading to incorrect prediction labels; however, they still provide useful analysis that guides the agents. 
For the detailed results and analysis of GR, we provide in Appendix~\ref{subsec:feedback_detail_results}.
\begin{figure*}[t]
\centering
\includegraphics[width=\textwidth]{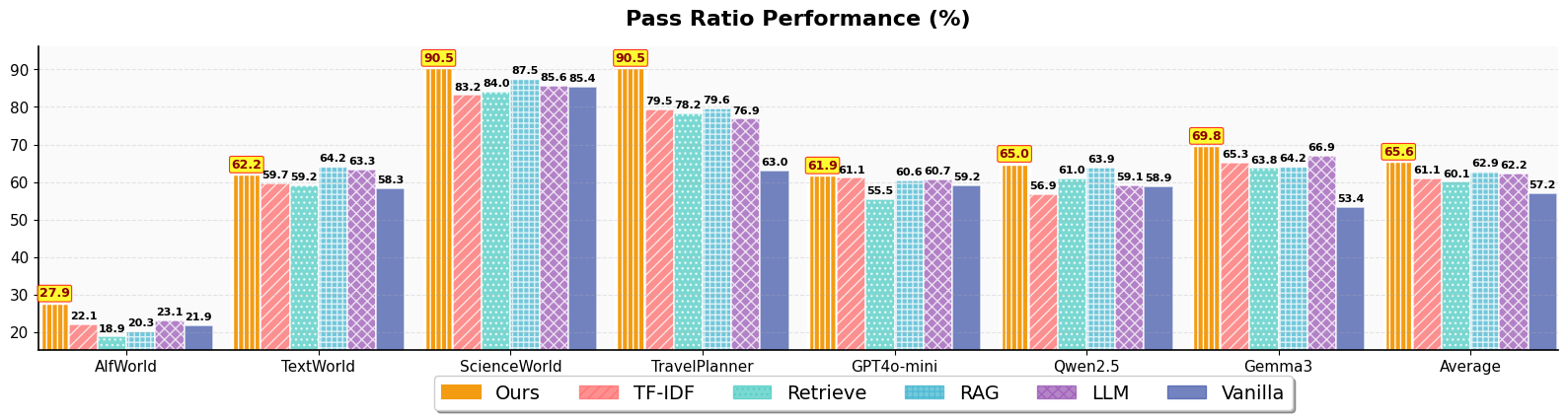}
\vspace{-0.6cm}
\caption{The pass ratio~(\%) results across four benchmarks and three LLM agents. Ours consistently outperforms other baselines.}
\label{fig:feedback_results}
\vspace{-0.6cm}
\end{figure*}

\stitle{Comparison with Deliberate Prompting-based LLM agents.} To further illustrate the effectiveness of ours, we conduct the comparison experiments with ReAct~\cite{yao2023react} and Reflexion~\cite{reflexion}, which are famous prompting-based LLM agents. We use the TextWorld benchmark with two open-source LLMs. 
\begin{wraptable}{r}{0.65\textwidth}
\vspace{-0.3cm}
\caption{Graph construction ablation study on detection. The Dir. and Undir. are short for directed graph and undirected graph.}
\label{tab:detect_ablation}
\centering
    \scalebox{0.8}{
        \begin{tabular}{l@{\hspace{0.2cm}}c@{\hspace{0.2cm}}c@{\hspace{0.2cm}}c@{\hspace{0.1cm}}c@{\hspace{0.2cm}}c@{\hspace{0.2cm}}c@{\hspace{0.1cm}}c} % {\linewidth}
        \toprule
        \textbf{}  & \multicolumn{3}{c}{\textbf{AlfWorld}} & \multicolumn{3}{c}{\textbf{ScienceWorld}} \\
        \cmidrule(lr){2-4} \cmidrule(lr){5-7} 
        Graph & GPT4o. & Qwen2.5 & Gemma3 & GPT4o. & Qwen2.5 & Gemma3 & Avg. \\
        \midrule
        \rowcolor{gray!30} \multicolumn{8}{c}{\textbf{Onehot}} \\
        Dir.   & \textcolor{red}{\textbf{64.22}}	& 69.38	& 65.60 & 85.95	& 81.99	& 81.02 & 74.69\\
        Undir. & 62.44	& 67.83	& 65.63 & 82.21	& 82.17	& 81.71 & 73.67\\
        \rowcolor{gray!30} \multicolumn{8}{c}{\textbf{Bert}} \\
         Dir.     & \textcolor{blue}{\underline{63.98}}	& \textcolor{blue}{\underline{69.89}}	 & \textcolor{blue}{\underline{66.85}} & \textcolor{red}{\textbf{87.84}}	&\textcolor{blue}{\underline{84.57}}	&\textcolor{blue}{\underline{83.39}}  & \textcolor{red}{\textbf{76.09}}\\
         Undir.   & 63.05	& \textcolor{red}{\textbf{70.18}}	 & \textcolor{red}{\textbf{67.22}} & 84.32	&\textcolor{red}{\textbf{85.68}}	&\textcolor{red}{\textbf{83.72}} & \textcolor{blue}{\underline{75.70}}\\
        \bottomrule
        \end{tabular}
    }
\end{wraptable}
As shown in Table~\ref{tab:eval_results}, our method outperforms both baselines. This is primarily because ReAct and Reflexion rely heavily on the inherent reasoning capabilities of the LLM, which can lead to error accumulation and degraded performance, especially when the underlying model is not sufficiently large, and no external feedback signals are available.

\subsection{Ablation Study}
\label{sec:ablation}
\stitle{Graph Construction.} To examine the graph's variance, we conducted ablation studies on another three settings, exploring combinations of two node features, Bert embedding or one-hot embedding as attribution,  and two edge types, directed and undirected edges. As shown in Table \ref{tab:detect_ablation}, our results indicate that the performance of undirected graphs is slightly inferior to that of directed graphs.  The key reason is that POMDP dependencies are inherently directional,  and representing them with undirected graphs introduces noise, leading to a slight performance decline. Additionally, using one-hot encoding as a feature yields poorer performance compared to using Bert, demonstrating that text information is useful in the detection tasks.

\stitle{Feedback Integration.} To explore the insight of the integration mechanisms, we evaluate four types of integration mechanisms: (1) incorporating feedback in the user prompt, (2) including feedback in the user prompt without label explanations, (3) adding feedback in the system prompt, and (4) appending feedback to the trajectory history. We conduct these experiments in the TextWorld environment with two open-source LLMs. As the results shown in Table~\ref{tab:ablation_integration}, Gemma3 is more robust than Qwen2.5 across different feedback integration strategies. When feedback is incorporated into the user prompt, it yields greater improvements than other configurations, even after integration with the system prompt.

\begin{figure}[t]
  \centering
  \begin{subfigure}{0.45\linewidth}
    \includegraphics[width=\textwidth]{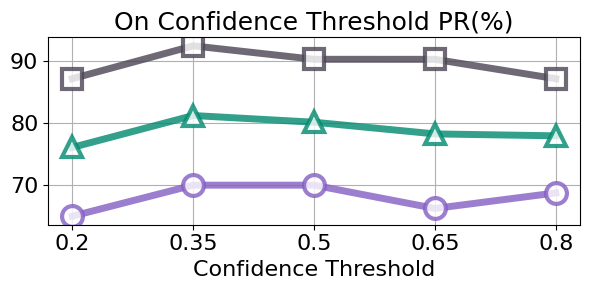}   
    \label{fig:ablation_threshold}
  \end{subfigure}
  \begin{subfigure}{0.45\linewidth}
    \includegraphics[width=\textwidth]{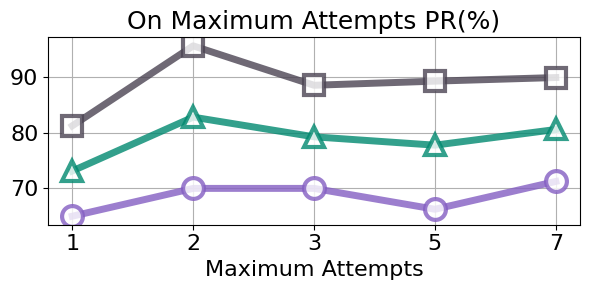}   
        \label{fig:ablation_attempts}
  \end{subfigure}
  \vspace{-0.5cm}
  \caption{The ablation study of pass ratio~(\%) on maximum attempt times and confidence threshold.}
  \vspace{-0.5cm}
  \label{fig:ablation}
\vspace{-0.2cm}
\end{figure}

\begin{wraptable}{r}{0.50\textwidth}
\caption{{Ablation study results~(PR) of feedback integration mechanisms on TextWorld. In the table, ``exp.'' is short for ``explanations''. The best performances are in \textcolor{red}{\textbf{bold}}.} }
\centering
\scalebox{1.0}{ \begin{tabular}{lcc} 
    \Xhline{1.2pt}
    {Method}  & {Qwen2.5} & {Gemma3} \\ 
    \Xhline{1pt}
    {in trajectories}  & {57.50}  & {70.00}    \\
    {in system}        & {60.00}  & \textcolor{red}{\textbf{75.00}}  \\
    {in user}          & \textcolor{red}{\textbf{65.00}}  & \textcolor{red}{\textbf{75.00}}   \\
    {in user w/o exp.} & {62.50}  & {72.50}  \\
    \Xhline{1.2pt}
\end{tabular}
}
\label{tab:ablation_integration}
\end{wraptable}
\stitle{Hyperparameter Study of Feedback.} Furthermore, we conduct ablation studies to analyze the impact of different settings of {\pipeline} on the TextWorld and ScienceWorld benchmarks using Qwen2.5 and Gemma3. Specifically, we analyze two key factors: the maximum number of attempts and the choice of confidence threshold, as detailed in Appendix~\ref{subsec:pipeline}. As illustrated in Figure~\ref{fig:ablation}, PR performance improves as the number of attempts increases, then decreases, before eventually stabilizing. 
This behavior is intuitive; additional attempts raise the likelihood of producing valid and reasonable actions, thereby reducing the probability of failure at each step. However, we also observe a decline in GR, consistent with the findings in Section~\ref{subsec:feedback}.
In contrast, the confidence threshold has only a marginal effect on overall performance. We attribute this to the strength of the detection module, where the threshold plays a minor role, when detection is highly accurate and robust, as relatively few false alarms or misclassifications propagate to the next stage. If detection were less reliable, the choice of threshold would likely have a much greater impact. Interestingly, we also find that setting a higher confidence threshold slightly improves GR performance.
Additional quantitative results supporting these observations are provided in Appendix~\ref{subsec:ablation_detail_results}.

\section{Conclusion}
In this paper, we propose {\ours}, a novel PGM-based graph detection method for step-level diagnosis of agent action decisions. Compared with traditional approaches and LLM-based methods such as text classification, RAG, or LLM-as-judge, {\ours} achieves lower error rates while requiring fewer samples. Beyond the detection module itself, we further introduce {\pipeline}, a flexible pipeline that integrates the detection module as an independent sandbox to provide actionable feedback on agent behaviors.
We conduct extensive experiments on four benchmarks and three LLM-based agents to validate the effectiveness of our approach. The experiments and findings highlight not only the effectiveness of {\ours} in detecting errors but also its potential as a general framework for improving decision-making in LLM agents. We believe this work can inspire more reliable performance-enhancement strategies for agent-based systems.

\stitle{Limitation.} This work requires collecting a dataset, which can be time-consuming for tasks with limited historical trajectory data. Although continuously updating the trajectory pool and action-centric graph could further improve agent performance, we do not explore this aspect in the current paper due to space constraints. We leave it as a direction for future work.

\stitle{Broader Impact.} This work introduces a method for improving the performance of existing LLM agents, formalized as a plug-in module that integrates seamlessly with existing agent frameworks. We believe this work contributes positively to the broader goal of building more capable and reliable AI systems, with potential benefits across a range of real-world applications.

\bibliographystyle{plain}
\bibliography{example_paper}

\appendix
\newpage
\section{Related Work}
\stitle{LLM Agents.} Empowered by LLMs, agents have experienced rapid growth and demonstrated remarkable performance across a wide range of tasks, including goal reasoning and action execution~\citep{xi2025rise}.For instance, LLMs have empowered embodied agents~\citep{chen2023robogpt} with perception, interaction, and planning skills for versatile operation in virtual and physical environments~\citep{londono2024fairness}. To address the long-horizon interaction tasks, existing methods can be divided into two categories: fine-tuning-based and fine-tuning-free methods. To obtain a refined language model agent, fine-tuning-based methods~\citep{wang2025steca,xiong2024watch,wang2024e2cl,song2024trial,wang2023math} enhance decision-making capabilities by tuning LLMs from expert demonstrations or exploration~\citep{chen2023fireact,yin2023agent,xiang2023language,song2024trial}. Another line of work incroporates external tools/models to gain improvements, such as structure search~\citep{yao2023tree,besta2024graph,hao2023reasoning,zhuang2023toolchain} and retrieval~\citep{xiao2023o3d,kagaya2024rap,zhou2024trad}. These methods typically guide LLM agents by incorporating external knowledge. For instance, structure search offers feedback from the environment~\citep{xiang2023language}, while retrieval selects optimal actions by comparing them to offline successful trajectories~\citep{kagaya2024rap}.

\stitle{Graph in LLM Reasoning.} To enhance the reasoning capacity of LLM, recent works~\citep{yao2023tree,besta2024graph,wang2022self}, such as chain-of-thoughts~\citep{wei2022chain}, are proposed. These methods essentially decompose the LLM’s reasoning into nodes and edges, where nodes represent entities and edges represent thought processes~\citep{besta2024graph}, thereby modeling interactive relationships. Leveraging these relationships can enhance the LLM’s reasoning capabilities. Alternatively, explicit graph structures like knowledge graphs~\citep{mavromatis2025gnn} can serve as external knowledge bases to guide reasoning, such as GraphRAG~\citep{peng2024graph,luo2024graph,li2024decoding}, Knowledge Graph Question Answering~(KGQA)~\citep{lan2022complex,ye2021rng,zhang2022arl}. In ~\citep{wang2024understanding}, the authors propose that the reasoning ability of the language model can be seen as an aggregation of the numerous indirect reasoning paths encountered during pretraining. Moreover, integrating graph structures with LLMs can enhance their reasoning abilities. For instance, ~\citep{li2025injecting} embeds knowledge graph representations directly into LLM tokens as graph semantics, enabling the model to incorporate structural information without relying on prompt engineering or extensive fine-tuning.  {Furthermore, graphs are increasingly used to ground LLM reasoning in actionable plans and environments. For instance, graphs can structure high-level subgoals~\citep{lee2022dhrl}, represent physical scenes for failure recovery~\citep{yu2025scene}, or facilitate graph-based learning to improve the planning robustness of LLM agents~\citep{wu2024can}.}

\section{Proofs of Theorem \ref{th:1}}
\label{sec:proof}
\begin{proof}
We prove the proof of Theorem \ref{th:1} in several steps.
\paragraph{(1) Sufficiency.} Assumption~\ref{assump:A1} states exactly that \(S\) d-separates \(Y\) from the rest of the observed variables, i.e. \(Y\perp\!\!\!\perp\ X\setminus S\mid S\). This is equivalent to the conditional probability factorization $\mathbb{P}(Y\mid X)=\mathbb{P}(Y\mid S)$. Hence \(S\) is a sufficient statistic for \(Y\) relative to \(X\).

\paragraph{(2) Bayes risk equality.} Given the full input \(X\), the Bayes classifier and its risk are:
\[
h^*_X(x) = \arg\max_{c\in\{1,\dots,C\}} \mathbb{P}(Y=c\mid X=x), \text{and} ~ R^*(X) = \mathbb{E}\big[ 1\{h^*_X(X)\neq Y\}\big].
\]
Because \(\mathbb{P}(Y\mid X)=\mathbb{P}(Y\mid S)\), the Bayes posterior and classifier can be written using \(S\) only:
\[
h^*_X(x) = \arg\max_c \mathbb{P}(Y=c\mid S=\Phi_{\mathrm{g}}(x)) \equiv h^*_S(\Phi_{\mathrm{g}}(x)).
\]
Therefore \(R^*(X) = R^*(S)\): no additional reduction in Bayes risk is possible by observing \(X\) instead of \(S\).

\paragraph{(3) Information-theoretic ordering.} For any representation \(U=\Phi(X)\), consider the Markov chain \(Y\to X\to U\). By the data processing inequality, $I(Y;U) \le I(Y;X)$. Because \(S\) is a (deterministic) function of \(X\) and suffices for \(Y\), we also have \(I(Y;S) = I(Y;X)\). Hence
\( I(Y;U) \le I(Y;S)\).

A standard information-theoretic lower bound relates classification error to the remaining uncertainty \(H(Y)-I(Y;U)\). Therefore a representation with strictly larger mutual information with \(Y\) yields a strictly smaller lower bound on achievable error; hence if \(I(Y;S)>I(Y;U)\) then \(R^*(S) < R^*(U)\).

\paragraph{(4) Information-Theoretic Sample Complexity.} We now derive the lower bound on the number of samples $m$ required to achieve an expected classification error $\epsilon$. Let the true class label $Y \in \{1, \dots, C\}$ be uniformly distributed, such that the initial entropy is $H(Y) = \log C$. Let $\hat{Y}$ be the predicted label derived from a dataset of $m$ independent samples $S_m = \{X_1, \dots, X_m\}$.

By Fano's Inequality, the conditional entropy of the true label given the prediction is bounded by the probability of error $\epsilon = \mathbb{P}(\hat{Y} \neq Y)$:
$$H(Y \mid \hat{Y}) \le H(\epsilon) + \epsilon \log(C - 1)$$
To simplify the bound, we upper-bound the binary entropy $H(\epsilon) \le 1$ (assuming logarithms are base-2) and $\log(C - 1) < \log C$, yielding:
$$H(Y \mid \hat{Y}) < 1 + \epsilon \log C$$

Since the prediction $\hat{Y}$ is a function of the data $S_m$, they form a Markov chain $Y \to S_m \to \hat{Y}$. By the Data Processing Inequality, $H(Y \mid S_m) \le H(Y \mid \hat{Y})$. Furthermore, by the definition of mutual information, $H(Y \mid S_m) = H(Y) - I(Y ; S_m) = \log C - I(Y ; S_m)$. Substituting these into the relaxed Fano bound gives:
$$\log C - I(Y ; S_m) < 1 + \epsilon \log C$$
Rearranging to isolate the mutual information:
$$I(Y ; S_m) > (1 - \epsilon) \log C - 1$$

Assuming the $m$ samples are drawn independently, the total information gained about $Y$ from $S_m$ is bounded by $m$ times the information gained from a single representation $X$, such that $I(Y ; S_m) \le m I(Y ; X)$. Substituting this upper bound yields:
$$m \cdot I(Y ; X) > (1 - \epsilon) \log C - 1$$
$$m \gtrsim \frac{(1 - \epsilon)\log C - 1}{I(Y ; X)}$$

Having established this general lower bound, we substitute the representations from Section~\ref{sec:theory} to obtain the sample complexity for the respective models:
$$m_{\mathrm{g}} \propto \frac{(1-\epsilon)\log C - 1}{I\!\left(Y ; S\right)}, \quad m_{\mathrm{seq}} \propto \frac{(1-\epsilon)\log C - 1}{I\!\left(Y ; U\right)}$$

From paragraph (3), we established that $I(Y ; S) \ge I(Y ; U)$. Because the mutual information term resides in the denominator, a larger information capacity requires fewer samples to achieve the same error threshold $\epsilon$. It directly follows that $m_{\mathrm{g}} \le m_{\mathrm{seq}}$, with strict inequality if $I(Y ; S) < I(Y ; U)$. This completes the proof.

\end{proof}

\section{Detailed Implementation}
\subsection{Error Definition}
\label{sec:label_definition}
As described in Section~\ref{sec:problem}, we utilize the milestone state definition to detect the errors, which is based on the previous actions and goals.  Besides, we also consider the repeated action as an error. In this work, we define the error using the following categories:
\begin{itemize}[itemsep=1.5pt,topsep=0pt,parsep=0pt,leftmargin=*]
    \item {No Error~(N.E.)}: The current action goes directly towards the next milestone state.
    \item {Illegal Action~(I.A.)}: The current action is not a valid action for the current environment.
    \item {Repeated Action~(R.A.)}: The current action has been done in the trajectory, and the results are the same.
    \item {Incorrect Target~(I.T.)}: The current action causes the agent to grab a wrong object or move to a wrong destination.
    \item {Precondition Not Met~(P.M.)}: The current action is valid, but it can only be executed when the agent has finished a specific action.
    \item {Condition Met, Action Not Taken~(C.M.)}: The current action is a valid action, but not necessary for the next milestone state or goal.
\end{itemize}
Notably, these error definitions are not specific to any particular environment. As a result, they do not include environment-related errors, such as tool misuse, which makes them easy to extend to other environments. 

{To further illustrate the error space, we provide an example. Given a task such as ``clean a plate and put it on the countertop'', the key states include: (1) a plate is found, (2) the plate is cleaned, and (3) the plate is placed on the countertop.
Suppose the action-only trajectory is ``go to desk, go to basin, pick up plate 1, clean plate 1, go to desk 1.'' In this trajectory, the label for ``go to desk 1'' is ``Condition Met, Action Not Taken.''
Making this prediction only requires tracing back to the action ``clean plate 1'', because after that action, the plate becomes clean. This example demonstrates how the error space is defined and why the Markov blanket assumption is reasonable.}
\begin{table*}[t]
\caption{Action error statistic information of datasets.}
\label{tab:dataset_error}
\centering
  \resizebox{\textwidth}{!}{
\begin{tabular}{lcccccccccccc} 
    \Xhline{1.2pt}
    \textbf{}  & \multicolumn{6}{c}{\textbf{Train and Validate}} & \multicolumn{6}{c}{\textbf{Test}} \\
    \cmidrule(lr){2-7} \cmidrule(lr){8-13} 
    Agent  & N.E. & I.A. & I.T. & R.A. & P.M. & C.M. & N.E. & I.A. & I.T. & R.A. & P.M. & C.M. \\
    \Xhline{1pt}
    \rowcolor{gray!30} \multicolumn{13}{c}{\textbf{AlfWorld}} \\
    GPT4o-mini & 746 & 4410 & 410 & 51 & 403 & 2187 & 245 & 1546 & 126 & 14 &179 & 771 \\
    Qwen2.5 & 804 & 5612 & 544 & 552 & 2825 & 1105 & 268 & 1838 & 171 & 173 & 1082 & 351\\
    Gemma3 & 804 &2186 & 583 & 237 & 3334 & 1120 & 268 & 932 & 213 & 96 & 1150 & 367\\
    \hline
    \rowcolor{gray!30} \multicolumn{13}{c}{\textbf{TextWorld}}\\
    GPT4o-mini & 240 & 181 & 81 & 70 & 549 & 63 & 80 & 76 & 35 & 32 & 343 & 37\\
    Qwen2.5 & 240 & 298 & 67 & 88 & 214 & 60 & 80 & 218 & 46 & 42 & 142 & 30 \\
    Gemma3 & 240 & 146 & 57 & 50 & 244 & 49 & 80 & 26 & 43 & 23 & 151 & 31\\
    \hline
    \rowcolor{gray!30} \multicolumn{13}{c}{\textbf{ScienceWorld}} \\
    GPT4o-mini & 601 & 4943 & 57 & 72 & 369 &  186 & 288 & 2924 & 13 & 21 & 306 & 57\\
    Qwen2.5 & 600 & 4332 & 23 & 59 & 383 & 54 & 288 & 1778 & 12 & 11 & 149 & 11\\
    Gemma3 & 600 & 3399 & 37 & 263 & 65 & 60 & 288 & 1953 & 8 & 127 & 21 & 42\\
    \hline
    \rowcolor{gray!30} \multicolumn{13}{c}{\textbf{TravelPlanner}} \\
    GPT4o-mini & 12441 & 507 & 41 & 394 & 41 & 12 & 4210 & 177 & 24 & 127 & 3 & 0 \\
    Qwen2.5 & 9362 & 3731 & 143 & 502 & 37 & 18 & 2826 & 1181 & 57 & 220 & 14 & 10\\
    Gemma3 & 9341 & 3734 & 140 & 547 & 35 & 20  & 2821 & 1174 & 61 & 216 & 13 & 10 \\
    \Xhline{1.2pt}
    \end{tabular}}
\end{table*}

\begin{table}[h]
\caption{Action and trajectory statistic information of datasets. GPT4o. is short for GPT4o-mini.}
\label{tab:dataset_TA}
\centering
\renewcommand\tabcolsep{3pt}
\begin{tabular}{lcccccc} 
    \Xhline{1.2pt}
    \textbf{}  & \multicolumn{3}{c}{\textbf{(Sub) Trajectory}} & \multicolumn{3}{c}{\textbf{Action}} \\
    \cmidrule(lr){2-4} \cmidrule(lr){5-7}
    Agent  & train & validate & test & train & validate & test \\
    \Xhline{1pt}
    \rowcolor{gray!30} \multicolumn{7}{c}{\textbf{AlfWorld}} \\
    GPT4o. & 7386 & 821 & 2881 &  74127 & 8286 & 28704\\
    Qwen2.5 & 10297 & 1145 & 3883 & 63522 & 7288 & 25020 \\
    Gemma3  & 7437 & 827 & 3026 & 41432 & 4817 & 16352 \\
    \hline
    \rowcolor{gray!30} \multicolumn{7}{c}{\textbf{TextWorld}}\\
    GPT4o. & 1065 & 119 & 603 & 9760 & 1057 & 4706\\
    Qwen2.5 & 870 & 97 & 558 & 10286 & 1161 & 5923\\
    Gemma3 & 707 & 79 & 354 & 7131 & 721 & 4174 \\
    \hline
    \rowcolor{gray!30} \multicolumn{7}{c}{\textbf{ScienceWorld}} \\
    GPT4o. & 5605 & 623 & 3609 & 30415 & 3506 & 16497 \\
    Qwen2.5 &4905 & 546 & 2249 & 17557 & 1818 & 5277 \\
    Gemma3 & 3981 & 443 & 2439 & 17713 & 1953 & 7659\\
    \hline
    \rowcolor{gray!30} \multicolumn{7}{c}{\textbf{TravelPlanner}} \\
    GPT4o. & 12055 & 1340 &  4541 & 82691 & 9201 & 31017\\
    Qwen2.5 & 12413 & 1380 & 4308 & 72468 & 7996 & 23604 \\
    Gemma3 & 12435 & 1382 & 4295 & 72260 & 8079 & 23547\\
    \Xhline{1.2pt}
\end{tabular}
\end{table}

\subsection{Dataset Construction}
\label{sec:dataset}
To build the datasets, we use three LLMs for each environment. For AlfWorld, TextWorld, and ScienceWorld, we use GPT4o-mini~\citep{openai_gpt4o_mini}, Qwen2.5~\citep{yang2025qwen3}, and Gemma3~\citep{team2025gemma} to construct the Agent. { We first run the training tasks to collect agent trajectories and obtain the expert/golden trajectories from the environments. Each full trajectory will be divided into several subtrajectories by the sequence of actions.
Notably, the expert/golden trajectories indicate successful trajectories, which can be obtained by random sampling or provided by benchmarks. We then utilize an LLM to analyze the trajectory to generate initial labels. Finally, we use a rule-based script to filter and remap labels. } The Gemma3 was used for the initialization labels. For the detailed prompts, we provide them at the end of the Appendix. Then we filter and map the label manually.
For AlfWorld, we use the AgentBoard dataset, with 402 tasks for training and validation, and 134 tasks for testing. The TextWorld environment comprises eight subsets; for each, we use 10 variants for training/validation and 5 variants for testing. Similarly, ScienceWorld has 30 subsets, with the first 10 variants used for training/validation and the subsequent 5 variants for testing. For all three of these environments, expert trajectories serve as ``No Error'' samples. The TravelPlanner dataset contains three levels of tasks~(easy, normal, and hard), with 880 tasks for training/validation and 300 tasks for testing (100 for each level).
The detailed statistics information is available in Table~\ref{tab:dataset_TA} and ~\ref{tab:dataset_error}. {All data and code will be released once the paper is accepted.}

\subsection{Agent Implementation}
\label{sec:agent}
In this paper, we follow the AgentBoard~\citep{chang2024agentboard} to build interactive agents for AlfWorld, TextWorld, and ScienceWorld. Each task is capped at 50 steps. At every step, we supply the LLM with the full history of observations and actions to support decision-making, along with a short demonstration included in the prompt to improve performance. For AlfWorld and TextWorld, we also provide the list of available action options derived from the environment. In ScienceWorld, however, the action space is prohibitively large, so we instead provide an action template. For TravelPlanner, we adopt the two-stage methods from open-source implementation\footnote{\url{https://github.com/OSU-NLP-Group/TravelPlanner}}. The detailed prompts used in our experiments are provided at the end of the Appendix.

\subsection{{\pipeline} Implementation}
\label{subsec:pipeline}
In this paper, we implement {\pipeline} using in-context learning with carefully designed prompts. At each step, the agent is allowed up to three attempts to satisfy the criteria defined in {\ours}. If a step fails, we provide additional feedback to guide the regeneration of actions, including the error type, its definition, and representative failure examples. In our experiments, we include three failure examples by default to ensure sufficient guidance while maintaining prompt efficiency.
To reduce false positives, we apply a confidence threshold of 0.6 to filter out unreliable detection results. For the baseline methods, we adopt the same three-attempt strategy for a fair comparison. However, since methods such as retrieval-based approaches and LLM-as-a-judge do not produce explicit confidence scores, this threshold cannot be applied to them. Within this iterative feedback loop, we incorporate potential error information directly into the prompt to enable more informed decision-making and refinement.

The detailed prompt templates are provided at the end of the Appendix, with redundant components omitted where they overlap with the agent implementation described earlier. This design ensures both reproducibility and clarity while avoiding unnecessary repetition.

\subsection{{ Detection Implementation}}
In this paper, we employ a three-layer GCN~\citep{kipf2016semi} with edge attributions as the graph-based detector, with hidden dimensions of [768, 512, 6]. We use AdamW~\citep{loshchilov2018decoupled} as the optimizer, with a learning rate of 1e-3 and a weight decay of 5e-5. The TF-IDF baseline is implemented using scikit-learn~\footnote{\url{https://github.com/scikit-learn/scikit-learn}}
 with default hyperparameters.
For Bert, we fine-tune the model using AdamW with a learning rate of 5e-5. For the retrieval-based and RAG methods, we use the training set as the retrieval database to ensure a consistent and fair comparison across methods. Additionally, for both the RAG and LLM-as-a-Judge approaches, we provide the full prompt templates at the end of the Appendix.

Overall, we keep the training settings and hyperparameters consistent across different methods whenever applicable, in order to ensure a fair and controlled evaluation of performance differences.

\subsection{Test-Time Graph Injection and Inference}
\label{sec:injection}

During the testing phase or when applying {\ours} as a real-time diagnosis sandbox for LLM agents, the framework operates in an inductive setting to evaluate previously unseen trajectories. To ensure consistency between the training and inference phases, we dynamically integrate novel test trajectories into the pre-established action-centric graph without retraining the GNN detector. 

Let $G = (\mathcal{V}, \mathcal{E})$ denote the graph constructed from the training set, where the node set $\mathcal{V}$ encompasses the unique actions and the training trajectories. 

Given a novel test trajectory $\xi_{\text{test}} \notin \mathcal{T}$ generated by the agent, we first utilize NLTK and the pretrained Bert model to process and initialize the representations of its constituent actions and observations. To embed $\xi_{\text{test}}$, we perform a dynamic graph injection operation by introducing a new trajectory node $v_{\text{test}}$. The augmented node set is defined as $\mathcal{V}' = \mathcal{V} \cup \{v_{\text{test}}\}$.The topological relationships for $v_{\text{test}}$ are established by linking it to its corresponding action nodes $v_i \in \mathcal{A}$, following the exact same heuristic defined in Equation~\ref{eq:attr}. If the new trajectory contains previously unseen actions, they are initialized and added to $\mathcal{A}$. The augmented adjacency matrix $A'$ is formally updated as:
\begin{equation}
A'{ij} = \left\{ \begin{array}{cc}
1 & \text{if } A{ij} = 1 \text{ in the training graph} \\
1 & v_i \in \mathcal{A}, \text{and } v_j = v_{\text{test}}, \text{where } v_i \in \xi_{\text{test}} \\
0 & \text{otherwise}\end{array} \right.
\end{equation}
Once the augmented graph and $A'$ are constructed, we apply our trained GNN detector to perform $K$ rounds of message passing. For computational efficiency during step-level action debugging, this message passing can be localized to the $K$-hop neighborhood of $v_{\text{test}}$. The representation of the test trajectory, $h_{\text{test}}^{(K)}$, is aggregated inductively from its constituent actions. Finally, we apply the softmax function to obtain the error probability vector $Z_{\text{test}} = \text{softmax}(h_{\text{test}}^{(K)})$, which provides immediate feedback for the agent's subsequent decision-making.

\subsection{Compute Resources Used}
The experiments in this paper involve both open-source LLMs (Qwen2.5-14B, Gemma3-27B) and commercial LLMs (GPT-4o, GPT-4o-mini). For the commercial models, inference is performed via the Azure OpenAI Service API, called from standard Linux machines using CPU-only execution, with no dependency on local GPU availability.

All open-source LLMs are deployed locally and served on a single NVIDIA A100 GPU (80GB VRAM) at a time, ensuring consistent and reproducible runtime characteristics across model families. Each model is loaded and evaluated independently to isolate performance measurements and avoid cross-model interference. The full hardware platform consists of a Linux system equipped with eight NVIDIA A100 GPUs, running CUDA 12.8, Python 3.10, and PyTorch 2.6.0, providing ample capacity for iterative experimentation and parallel development workflows.

\section{Extensive Results} 
\label{sec:detail_results}

\subsection{Quality of Datasets}
To verify the quality of the dataset, we conduct experimental validation in this section. While human-based evaluation provides the most reliable form of quality assessment, the sheer volume of trajectories makes it impractical for large-scale applications. To address this limitation, we employ two different LLMs for automatic annotation and introduce a cross-evaluation strategy to improve label quality and robustness.

Specifically, we use a larger and independent foundation model, GPT-OSS-120B, to re-annotate the ScienceWorld trajectories that were originally generated by Qwen2.5 and Gemma3. In total, this process covers 11,762 actions annotated with Gemma3. We then compare the resulting class distribution with that produced by our primary annotation model. As shown in Table~\ref{tab:annotation_robustness}, the distribution of error labels remains highly consistent across different models. We further calculate Cohen's Kappa $\kappa$ as the agreement metric. For Qwen2.5 and Gemma3, we have the results $0.8817$ and $0.8907$. This consistency suggests that our annotation pipeline is robust and not overly sensitive to the biases or noise introduced by any single model. Furthermore, it provides empirical evidence that our dataset maintains stable labeling characteristics even under model variation, supporting its reliability for downstream evaluation and training.

\begin{table*}[t]
\caption{Quality and robustness analysis of dataset on ScienceWorld. The trajectories are generated by Qwen2.5 and Gemma3. The annotations are generated by GPT-OSS-120B(GPT-OSS) and Gemma3.}  
\centering
\scalebox{1.0}{ \begin{tabular}{lcccc} 
    \Xhline{1.2pt}
    & \multicolumn{2}{c}{{\textbf{Qwen2.5 Trajectory }}}  & \multicolumn{2}{c}{{\textbf{Gemma3 Trajectory }}}\\
    \cmidrule(lr){2-3} \cmidrule(lr){4-5}
    {Annotation}  & {Gemma3} & {GPT-OSS}  & {Gemma3} & {GPT-OSS} \\ 
    \Xhline{1pt}
    \rowcolor{gray!30} {No Error(N.E.)}  & 20.49\% & 16.57\%  & 12.62\% & 13.02\% \\
    {Illegal Action(I.A.)}  & 68.10\% & 68.67\% & 77.22\% & 76.82\%\\
    \rowcolor{gray!30} {Repeated Action(R.A.)}  & 8.00\%  & 7.64\% & 8.16\%  & 8.09\%\\
    {Incorrect Target(I.T.)}  &0.59\%  & 0.59\% & 0.38\%  & 0.69\%  \\
    \rowcolor{gray!30} {Precondition Not Met(P.M.)}  & 0.78\%  & 0.86\% & 0.74\%  & 0.48\% \\
    {Condition Met, Action Not Taken(C.M.)}  & 2.04\%  & 5.07\%  & 0.87\%  & 0.91\% \\
    \Xhline{1.2pt}
\end{tabular}
}
\label{tab:annotation_robustness}
\end{table*}

\subsection{Error Detection}
\label{subsec:error_detect}
In Section~\ref{subsec:dectection}, we provide the detection results across four benchmarks. To facilitate a more comprehensive comparison of performance across each benchmark and LLM backbone, we present results in Table~\ref{tab:extended_detection}. We observe that our method consistently outperforms all baselines across all four benchmarks and three LLM Agents, demonstrating the broad effectiveness and generalizability of our approach.
\begin{table*}[h]
\caption{Extended action error detection marginal average results(\%), the metric is the average accuracy($\uparrow$). GPT4o. is short for GPT4o-mini. The best performances are in \textcolor{red}{\textbf{bold}}, and the second-best method is \textcolor{blue}{\underline{underlined}}.}
\label{tab:extended_detection}
\centering
\scalebox{0.90}{ \begin{tabular}{lccccccc} 
    \Xhline{1.2pt}
    Method  & AlfWorld & TextWorld & ScienceWorld & TravelPlanner & GPT4o. & Qwen2.5 & Gemma3  \\
    \Xhline{1pt}
    \rowcolor{gray!30} \multicolumn{8}{c}{\textbf{Text Classification}} \\
    TF-IDF & \textcolor{blue}{\underline{62.92}}	&	51.86	&	80.73	&	\textcolor{blue}{\underline{74.60}}	&	\textcolor{blue}{\underline{74.02}}	&	\textcolor{blue}{\underline{65.67}}	&	62.90 \\
    Bert   & 57.00	&	\textcolor{blue}{\underline{55.91}}	&	\textcolor{blue}{\underline{80.90}}	&	74.18	&	73.23	&	65.61	&	62.16 \\ 
    \rowcolor{gray!30} \multicolumn{8}{c}{\textbf{Retrieval}}\\
    MiniLM & 45.27	&	48.43	&	74.73	&	69.85	&	65.69	&	56.86	&	56.17 \\
    E5     & 51.82	&	45.61	&	74.68	&	69.44	&	67.34	&	54.36	&	59.47 \\
    GTR    & 53.50	&	48.11	&	74.97	&	69.72	&	68.64	&	55.85	&	60.24 \\
    \rowcolor{gray!30} \multicolumn{8}{c}{\textbf{RAG}} \\
    GPT4o  & 59.43	&	57.02	&	71.71	&	64.14	&	64.99	&	60.72	&	\textcolor{blue}{\underline{63.52}} \\
    Gemma3 & 60.04	&	57.70	&	78.15	&	38.01	&	57.37	&	59.06	&	59.00 \\
    Qwen2.5 & 54.31	&	43.55	&	68.43	&	28.92	&	46.34	&	48.92	&	51.15 \\
    \rowcolor{gray!30} \multicolumn{8}{c}{\textbf{LLM Zero-Shot}} \\
    GPT4o  & 31.44	&	45.46	&	33.36	&	6.97	&	28.86	&	29.10	&	29.96 \\
    Gemma3 & 31.26	&	42.40	&	24.51	&	3.64	&	25.13	&	25.60	&	25.62 \\
    Qwen2.5& 17.82	&	33.02	&	20.13	&	8.19	&	19.53	&	19.67	&	20.17 \\
    \rowcolor{gray!30} \multicolumn{8}{c}{\textbf{LLM One-Shot}} \\
    GPT4o  &24.12	&	48.17	&	12.65	&	7.06	&	22.89	&	20.40	&	25.72 \\
    Gemma3 & 26.89	&	37.64	&	15.20	&	3.73	&	20.52	&	19.90	&	22.19 \\
    Qwen2.5& 17.25	&	37.48	&	23.51	&	8.44	&	22.78	&	19.97	&	22.26 \\
    \rowcolor{gray!30} \multicolumn{8}{c}{\textbf{LLM Three-Shot}} \\
    GPT4o  & 27.76	&	45.09	&	14.22	&	3.45	&	24.71	&	18.32	&	24.86 \\
    Gemma3 & 27.92	&	35.52	&	33.66	&	5.53	&	25.05	&	29.39	&	22.54 \\
    Qwen2.5& 17.40	&	36.28	&	27.71	&	4.01	&	22.53	&	20.87	&	20.66 \\
    \hline
\rowcolor{gray!30} 
    Ours  & \textcolor{red}{\textbf{66.91}}	&	\textcolor{red}{\textbf{65.29}}	&	\textcolor{red}{\textbf{85.27}}	&	\textcolor{red}{\textbf{76.11}}	&	\textcolor{red}{\textbf{77.89}}	&	\textcolor{red}{\textbf{71.02}}	&	\textcolor{red}{\textbf{71.27}} \\
    \Xhline{1.2pt}
\end{tabular}
}
\end{table*}

\stitle{Extensive Statistical Significance Analysis.} To further assess the robustness of the detection methods under varying conditions, we repeat the experiments five times and report the mean results with error bars on TextWorld, which serves as a representative benchmark for this analysis. As shown in Table~\ref{tab:detection_robustness}, our method consistently outperforms all baselines across runs, with a small variance, confirming the stability of our approach and its alignment with our theoretical analysis.

\begin{table*}[h]
\caption{{Statistical significance analysis of action error detection on TextWorld.The best performances are in \textcolor{red}{\textbf{bold}}, and the second-best method is \textcolor{blue}{\underline{underlined}}.} }
\centering
\scalebox{0.95}{ \begin{tabular}{ccccccc} 
    \Xhline{1.2pt}
    & \multicolumn{3}{c}{{\textbf{Accuracy/Micro F1}}}  & \multicolumn{3}{c}{{\textbf{Macro F1}}}\\
    \cmidrule(lr){2-4} \cmidrule(lr){5-7}
    {Method}  & {GPT4o.} & {Qwen2.5} & {Gemma3}  & {GPT4o.} & {Qwen2.5} & {Gemma3}\\
    \Xhline{1pt}
    \rowcolor{gray!30} \multicolumn{7}{c}{{\textbf{Text Classification}}} \\
    {TF-IDF} & \ms{{0.6365}}{{0.0016}} & \ms{\textcolor{blue}{\underline{0.4996}}}{{0.0041}} & \ms{{0.4045}}{{0.0160}}  & \ms{{0.3399}}{{0.0093}} & \ms{{0.2712}}{{0.0042}} & \ms{{0.2256}}{{0.0074}}\\
    {Bert}   & \ms{\textcolor{blue}{\underline{0.6368}}}{{0.0097}} & \ms{\textcolor{blue}{\underline{0.4996}}}{{0.0501}} & \ms{{0.4345}}{{0.0493}}  & \ms{{0.4177}}{{0.0120}} & \ms{{0.3897}}{{0.0380}} & \ms{{0.3711}}{{0.0283}}\\
    \rowcolor{gray!30} \multicolumn{7}{c}{{\textbf{Retrieve}}}\\
    {GTR}    &  \ms{{0.5322}}{{0.0073}} & \ms{{0.3018}}{{0.0009}} & \ms{{0.5322}}{{0.0073}} & \ms{\textcolor{red}{\textbf{0.4257}}}{{0.0041}} & \ms{{0.2472}}{{0.0056}} & \ms{{0.4257}}{{0.0041}} \\
    \rowcolor{gray!30} \multicolumn{7}{c}{{\textbf{RAG}}} \\
    {Gemma3} &  \ms{{0.6232}}{{0.0030}} & \ms{{0.5183}}{{0.0066}} & \ms{\textcolor{blue}{\underline{0.6124}}}{{0.0072}} & \ms{{0.4217}}{{0.0052}} & \ms{\textcolor{red}{\textbf{0.4649}}}{{0.0036}} & \ms{{0.4952}}{{0.0052}}\\
    \rowcolor{gray!30} \multicolumn{7}{c}{{\textbf{LLM Zero-Shot}}} \\
    {Gemma3} & \ms{{0.4732}}{{0.0030}} & \ms{{0.3427}}{{0.0029}} & \ms{{0.4492}}{{0.0000}} & \ms{{0.2522}}{{0.0000}} & \ms{{0.2462}}{{0.0009}} & \ms{{0.2680}}{{0.0000}}\\
    \rowcolor{gray!30} \multicolumn{7}{c}{{\textbf{LLM One-Shot}}} \\
    {Gemma3} & \ms{{0.3755}}{{0.0755}} & \ms{{0.2620}}{{0.0741}} & \ms{{0.3090}}{{0.0554}} & \ms{{0.2157}}{{0.0408}} & \ms{{0.1824}}{{0.0606}} & \ms{{0.1799}}{{0.0442}}\\
    \rowcolor{gray!30} \multicolumn{7}{c}{{\textbf{LLM Three-Shot}}} \\
    {Gemma3} & \ms{{0.1158}}{{0.0457}} & \ms{{0.3194}}{{0.0505}} & \ms{{0.1876}}{{0.0907}} & \ms{{0.0994}}{{0.0441}} & \ms{{0.1932}}{{0.0612}} & \ms{{0.1474}}{{0.0694}}\\
    \hline
    \rowcolor{gray!30} {Ours} & \ms{\textcolor{red}{\textbf{0.6872}}}{{0.0091}} & \ms{\textcolor{red}{\textbf{0.5810}}}{{0.0133}} & \ms{\textcolor{red}{\textbf{0.6316}}}{{0.0229}}  & \ms{\textcolor{blue}{\underline{0.4226}}}{{0.0160}} & \ms{\textcolor{blue}{\underline{0.4298}}}{{0.0063}} & \ms{\textcolor{red}{\textbf{0.5283}}}{{0.0199}}\\
    \Xhline{1.2pt}
\end{tabular}
}
\label{tab:detection_robustness}
\end{table*}

\subsection{Feedback Evaluation}
\label{subsec:feedback_detail_results}
\stitle{Detailed Results.} In Section~\ref{subsec:dectection}, we provide the average results across four benchmarks and three LLM agents; in this subsection, we present a more granular breakdown. As the PR results in Table~\ref{tab:feedback_PR} show, our method achieves 7 best and 4 second-best results across all configurations, demonstrating the consistent advantage of {\pipeline}. Furthermore, in most cases our method outperforms the vanilla baseline, confirming that incorporating feedback information effectively boosts the agent's task completion performance.
\begin{table*}[!h]
\caption{Action error feedback evaluation results. The metric is PR($\uparrow$). GPT4o. is short for GPT4o-mini. The best performances are in \textcolor{red}{\textbf{bold}}, and the second-best method is \textcolor{blue}{\underline{underlined}}.}
\label{tab:feedback_PR}
\centering
\scalebox{0.75}{ \begin{tabular}{l@{\hspace{0.20cm}}c@{\hspace{0.2cm}}c@{\hspace{0.10cm}}c@{\hspace{0.2cm}}c@{\hspace{0.2cm}}c@{\hspace{0.1cm}}c@{\hspace{0.2cm}}c@{\hspace{0.2cm}}c@{\hspace{0.1cm}}c@{\hspace{0.2cm}}c@{\hspace{0.2cm}}c@{\hspace{0.2cm}}c@{\hspace{0.2cm}}c@{\hspace{0.2cm}}} 
    \Xhline{1.2pt}
    \textbf{}  & \multicolumn{3}{c}{\textbf{AlfWorld}} & \multicolumn{3}{c}{\textbf{TextWorld}} & \multicolumn{3}{c}{\textbf{ScienceWorld}} & \multicolumn{3}{c}{\textbf{TravelPlanner}}\\
    \cmidrule(lr){2-4} \cmidrule(lr){5-7} \cmidrule(lr){8-10} \cmidrule(lr){11-13}
    Method  & GPT4o. & Qwen2.5 & Gemma3 & GPT4o. & Qwen2.5 & Gemma3 & GPT4o. & Qwen2.5 & Gemma3 & GPT4o. & Qwen2.5 & Gemma3 \\
    \Xhline{1pt}
    Vanilla & 18.66	&	\textcolor{blue}{\underline{40.30}}	&	6.72	&	45.00	&	\textcolor{blue}{\underline{65.00}}	&	65.00	&	\textcolor{blue}{\underline{77.26}}	&	75.69	&	86.81	&	79.33	&	54.67	&	55.00 \\
    \rowcolor{gray!30} \multicolumn{13}{c}{\textbf{Text Classification}} \\
    +TF-IDF & \textcolor{blue}{\underline{22.22}}	&	31.34	&	\textcolor{blue}{\underline{12.69}}	&	\textcolor{red}{\textbf{65.00}}	&	57.50	&	65.00	&	71.03	&	77.93	&	86.39  &	\textcolor{blue}{\underline{80.67}}	&	\textcolor{blue}{\underline{60.67}}	&	\textcolor{red}{\textbf{97.00}} \\
    \rowcolor{gray!30} \multicolumn{13}{c}{\textbf{Retrieve}}\\
    +GTR & 15.67	&	32.09	&	8.96	&	37.50	&	\textcolor{red}{\textbf{72.50}}	&	67.50	&	67.32	&	79.86	&	83.33  &	80.00	&	59.33	&	95.33 \\
    \rowcolor{gray!30} \multicolumn{13}{c}{\textbf{RAG}} \\
    +Gemma3 & 20.71	&	37.31	&	2.99	&	50.00	&	\textcolor{red}{\textbf{72.50}}	&	\textcolor{blue}{\underline{70.00}}	&	70.08	&	\textcolor{red}{\textbf{86.11}}	&	\textcolor{blue}{\underline{87.50}}  &	\textcolor{red}{\textbf{82.67}}	&	59.67	&	\textcolor{blue}{\underline{96.33}} \\
    \rowcolor{gray!30} \multicolumn{13}{c}{\textbf{LLM Zero-Shot}} \\
    +GPT4o & 18.66	&	33.58	&	\textcolor{red}{\textbf{17.16}}	&	\textcolor{blue}{\underline{52.50}}	&	\textcolor{blue}{\underline{65.00}}	&	\textcolor{red}{\textbf{72.50}}	&	71.35	&	80.41	&	85.42  &	\textcolor{blue}{\underline{80.67}}	&	57.33	&	92.67 \\
    \hline
    \rowcolor{gray!30} +Ours & \textcolor{red}{\textbf{26.12}}
	&	\textcolor{red}{\textbf{44.78}}	&	\textcolor{blue}{\underline{12.69}}	&	46.51	&	\textcolor{blue}{\underline{65.00}}	&	\textcolor{red}{\textbf{75.00}}	&	\textcolor{red}{\textbf{94.44}}	&	\textcolor{blue}{\underline{82.64}}	&	\textcolor{red}{\textbf{94.44}}	&	\textcolor{blue}{\underline{80.67}}	&	\textcolor{red}{\textbf{67.67}}	&	\textcolor{red}{\textbf{97.00}} \\
    \Xhline{1.2pt}
\end{tabular}
}
\end{table*}

\stitle{Grounding Ratio Analysis.} Regarding the GR, as shown in Table~\ref{tab:feedback_GR}, our method achieves the second-best GR overall. It is important to note that GR only indicates whether an action lies within the valid action space, but does not guarantee its correctness. Compared to the vanilla baseline, all methods exhibit varying GRs, suggesting that the feedback mechanism encourages the agent to explore alternative actions rather than converging on familiar ones. Moreover, GR is jointly influenced by task characteristics and the reasoning ability of the underlying LLM. For instance, in TextWorld, the action space contains many variants, making it inherently harder to generate grounded actions even with feedback; in contrast, TravelPlanner features a more constrained action space, leading to improved GR under feedback. We also observe model-dependent behavior: for GPT4o-mini and Qwen2.5-14B, the GR is lower than the vanilla baseline, whereas Gemma3-27B shows the opposite trend, suggesting that stronger reasoning capabilities allow the model to better leverage feedback for grounded action generation. Importantly, a brute-force guessing strategy would typically yield a similar PR but a lower GR compared to the baseline; however, we do not observe this pattern in our results, confirming that the feedback mechanism does not reduce to random guessing. To further confirm the statistical significance of these findings, we repeat the experiments five times and report results with error bars in Table~\ref{tab:detection_robustness}, where our method consistently outperforms all baselines with low variance.

Another notable observation is that GR does not directly correlate with PR. For instance, in TextWorld, although the GR of all methods falls below the vanilla baseline, the PR still improves. This is expected, as the feedback mechanism does not directly modify the LLM's parameters or decoding behavior, but instead encourages the generation of more diverse candidate actions, even when some contain errors. This diversity ultimately benefits task progress, decoupling PR gains from GR, and highlighting that grounding accuracy alone is insufficient as a proxy for overall agent performance.

\begin{table*}[!h]
\caption{Action error feedback evaluation results. The metric is GR($\uparrow$). GPT4o. is short for GPT4o-mini. The best performances are in \textcolor{red}{\textbf{bold}}, and the second-best method is \textcolor{blue}{\underline{underlined}}.}
\label{tab:feedback_GR}
\centering
\scalebox{0.75}{ \begin{tabular}{l@{\hspace{0.20cm}}c@{\hspace{0.2cm}}c@{\hspace{0.10cm}}c@{\hspace{0.2cm}}c@{\hspace{0.2cm}}c@{\hspace{0.1cm}}c@{\hspace{0.2cm}}c@{\hspace{0.2cm}}c@{\hspace{0.1cm}}c@{\hspace{0.2cm}}c@{\hspace{0.2cm}}c@{\hspace{0.2cm}}c@{\hspace{0.2cm}}c@{\hspace{0.2cm}}} 
    \Xhline{1.2pt}
    \textbf{}  & \multicolumn{3}{c}{\textbf{AlfWorld}} & \multicolumn{3}{c}{\textbf{TextWorld}} & \multicolumn{3}{c}{\textbf{ScienceWorld}} & \multicolumn{3}{c}{\textbf{TravelPlanner}}\\
    \cmidrule(lr){2-4} \cmidrule(lr){5-7} \cmidrule(lr){8-10} \cmidrule(lr){11-13}
    Method  & GPT4o. & Qwen2.5 & Gemma3 & GPT4o. & Qwen2.5 & Gemma3 & GPT4o. & Qwen2.5 & Gemma3 & GPT4o. & Qwen2.5 & Gemma3 \\
    \Xhline{1pt}
    Vanilla & 73.87	&	\textcolor{red}{\textbf{68.02}}	&	62.49	&	\textcolor{red}{\textbf{95.66}}	&	\textcolor{red}{\textbf{86.79}}	&	\textcolor{red}{\textbf{91.07}}	&	\textcolor{red}{\textbf{77.35}}	&	49.62	&	70.22	&	97.71	&	74.14	&	74.20\\
    \rowcolor{gray!30} \multicolumn{13}{c}{\textbf{Text Classification}} \\
    +TF-IDF & 74.22	&	58.12	&	63.04	&	53.33	&	46.07	&	48.02	&	71.03	&	\textcolor{red}{\textbf{58.69}}	&	\textcolor{red}{\textbf{75.03}}	&	97.83	&	75.58	&	\textcolor{blue}{\underline{99.58}} \\
    \rowcolor{gray!30} \multicolumn{13}{c}{\textbf{Retrieve}}\\
    +GTR & 74.73	&	54.89	&	64.66	&	58.11	&	43.84	&	52.58	&	67.32	&	55.42	&	72.16	&	97.73	&	\textcolor{blue}{\underline{76.48}}	&	99.21\\
    \rowcolor{gray!30} \multicolumn{13}{c}{\textbf{RAG}} \\
    +Gemma3 & \textcolor{blue}{\underline{75.78}}	&	61.88	&	\textcolor{red}{\textbf{91.19}}	&	57.35	&	44.24	&	52.33	&	70.08	&	\textcolor{blue}{\underline{57.58}}	&	71.41	&	\textcolor{blue}{\underline{97.83}}	&	76.04	&	99.53\\
    \rowcolor{gray!30} \multicolumn{13}{c}{\textbf{LLM Zero-Shot}} \\
    +GPT4o &  \textcolor{red}{\textbf{75.84}}	&	57.55	&	65.88	&	55.20	&	46.57	&	52.82	&	\textcolor{blue}{\underline{71.35}}	&	52.12	&	\textcolor{blue}{\underline{74.12}}	&	97.76	&	75.36	&	98.52  \\
    \hline
    \rowcolor{gray!30} +Ours & 71.19	&	\textcolor{blue}{\underline{62.33}}	&	\textcolor{blue}{\underline{66.97}}	&	\textcolor{blue}{\underline{67.89}}	&	\textcolor{blue}{\underline{62.78}}	&	\textcolor{blue}{\underline{89.79}}	&	69.09	&	57.26	&	73.55	&	\textcolor{red}{\textbf{98.07}}	&	\textcolor{red}{\textbf{81.23}}	&	\textcolor{red}{\textbf{99.81}}\\
    \Xhline{1.2pt}
\end{tabular}
}
\end{table*}

\stitle{Detailed Results of Deliberate Prompting-based LLM agents.}  We further provide the PR and GR in Table~\ref{tab:comparison_two}. As mentioned in Section~\ref{subsec:feedback}, our method mitigates these limitations, which highly rely on  reasoning capabilities of the LLM, by incorporating structured external feedback, allowing the model to self-correct more reliably without depending solely on its parametric knowledge. As a result, our approach remains robust even when applied to smaller backbone models.
\begin{table}[h]
\caption{{Comparison between ours and modern prompts on TextWorld. The best performances are in \textcolor{red}{\textbf{bold}}.} }
\centering
   \renewcommand\tabcolsep{3.5pt}
\scalebox{1.0}{ \begin{tabular}{ccccc} 
    \Xhline{1.2pt}
    & \multicolumn{2}{c}{{\textbf{Pass Ratio(\%)}}}  & \multicolumn{2}{c}{{\textbf{Ground Ratio(\%)}}}\\
    \cmidrule(lr){2-3} \cmidrule(lr){4-5}
    {Method}  & {Qwen2.5} & {Gemma3}  & {Qwen2.5} & {Gemma3}\\
    \Xhline{1pt}
    \rowcolor{gray!20}  {ReAct}      & {62.50}  & {70.00}   & {69.86}  & {86.60} \\
    {Reflexion} & {60.00}  & {67.50}   & \textcolor{red}{\textbf{71.18}}  & {79.60} \\
    \rowcolor{gray!20}  {Ours}       & \textcolor{red}{\textbf{65.00}}  & \textcolor{red}{\textbf{75.00}}   & {62.78}  & \textcolor{red}{\textbf{89.78}}  \\
    \Xhline{1.2pt}
\end{tabular}
}
\label{tab:comparison_two}
\end{table}

\begin{table}[!h]
\caption{{Statistical significance analysis of error feedback on TextWorld.The best performances are in \textcolor{red}{\textbf{bold}}.} }
\centering
\renewcommand\tabcolsep{3.5pt}
\scalebox{0.95}{ \begin{tabular}{ccccc} 
    \Xhline{1.2pt}
    & \multicolumn{2}{c}{{\textbf{Pass Ratio(\%)}}}  & \multicolumn{2}{c}{{\textbf{Ground Ratio(\%)}}}\\
    \cmidrule(lr){2-3} \cmidrule(lr){4-5}
    {Method}  & {Qwen2.5} & {Gemma3}  & {Qwen2.5} & {Gemma3}\\
    \Xhline{1pt}
    {Vanilla}  & \ms{{65.00}}{{6.52}} & \ms{{70.50}}{{5.50}} & \ms{\textcolor{red}{\textbf{88.77}}}{{2.98}} & \ms{\textcolor{red}{\textbf{91.49}}}{{3.42}} \\
    \rowcolor{gray!20} \multicolumn{5}{c}{{\textbf{Text Classification}}} \\
    {+TF-IDF}  & \ms{{61.50}}{{3.35}}  & \ms{{67.50}}{{7.07}}  & \ms{{40.49}}{{2.35}} & \ms{{46.81}}{{1.51}}\\
    \rowcolor{gray!10} \multicolumn{5}{c}{{\textbf{Retrieve}}}\\
    {+GTR}    & \ms{{67.50}}{{6.12}} & \ms{{73.50}}{{6.75}}  & \ms{{41.16}}{{1.73}} & \ms{{51.52}}{{1.13}} \\
    \rowcolor{gray!20} \multicolumn{5}{c}{{\textbf{RAG}}} \\
    {+Gemma3}  & \ms{\textcolor{red}{\textbf{69.50}}}{{3.71}}  & \ms{{71.94}}{{5.36}}   & \ms{{40.81}}{{0.82}} & \ms{{52.66}}{{1.42}} \\
    \hline
    \rowcolor{gray!20} {+Ours} & \ms{{64.00}}{{1.22}} & \ms{\textcolor{red}{\textbf{76.00}}}{{3.74}} & \ms{{73.61}}{{6.74}} & \ms{{87.15}}{{4.78}} \\
    \Xhline{1.2pt}
\end{tabular}
}
\label{tab:improve_robustness}
\end{table}

\stitle{Statistical Significance Analysis.} To conduct the variance analysis, we perform experiments reporting error bars for feedback-improvement evaluations on the TextWorld environment. We repeat each experiment five times and report the mean and standard deviation. The results are presented in Table~\ref{tab:improve_robustness}. As the results demonstrate, our method achieves robust and stable performance across all runs, with low variance pass ratio consistently aligned with the findings reported in previous experiments. This confirms that our method is not sensitive to random initialization or stochastic variation, further validating the reliability of our reported results.

\subsection{Ablation Study}
\label{subsec:ablation_detail_results}
\stitle{Graph Building.} 
In Section~\ref{sec:ablation}, we provide an analysis of two types of graphs and node embedding initialization. In this section, we further analyze the distinction between action-centric and observation-centric graph designs. As shown in Table~\ref{tab:graph_building}, in ALFWorld, when modeling actions as nodes, the resulting graph scale is smaller than when using observations as nodes. However, even with Qwen2.5-14B, the number of unique actions exceeds that of observations, while the number of edges remains smaller. This suggests that an action-centric graph yields a more compact yet informative structure.

From the perspective of graph construction, the observation space is significantly larger than the action space. Treating observations as nodes would therefore greatly increase the graph size and complexity, imposing additional computational overhead and making it harder for the model to efficiently capture task-relevant relationships. Using observations as nodes and actions as edges tends to produce a much sparser graph, which may limit the model's ability to learn meaningful representations. In contrast, our design, where actions serve as nodes, encourages denser connectivity and richer structural information, enabling the model to better capture the sequential and relational nature of agent decision-making.

\begin{table*}[!h]
\caption{{The number of nodes/edges between action-centric/observation-centric graphs of the AlfWorld environment.} }
\centering
\scalebox{1.0}{ \begin{tabular}{ccccc} 
    \Xhline{1.2pt}
      & {GPT4o-mini} & {Qwen2.5-14B}  & {Gemma3-27B} \\
    \Xhline{1pt}
    {node(action)/edge(observation)}  & 682/2,900  & 1,825/3,868   & 623/1,983 \\
    {node(observation)/edge(action)}  & 1,154/5,938  & 910/4,404   & 834/4,064 \\
    \Xhline{1.2pt}
\end{tabular}
}
\label{tab:graph_building}
\end{table*}

\stitle{Detailed Results of Prompt Ablation Study.} We provide the detailed results of the feedback integration of PR and GR in Table~\ref{tab:ablation_form}.
\begin{table}[!h]
\caption{{Ablation study of feedback integration mechanisms on TextWorld. In the table, ``exp.'' is short for ``explanations''. The best performances are in \textcolor{red}{\textbf{bold}}.} }
\centering
   \renewcommand\tabcolsep{2.5pt}
\scalebox{1.0}{ \begin{tabular}{ccccc} 
    \Xhline{1.2pt}
    & \multicolumn{2}{c}{{\textbf{Pass Ratio(\%)}}}  & \multicolumn{2}{c}{{\textbf{Ground Ratio(\%)}}}\\
    \cmidrule(lr){2-3} \cmidrule(lr){4-5}
    {Method}  & {Qwen2.5} & {Gemma3}  & {Qwen2.5} & {Gemma3}\\
    \Xhline{1pt}
    \rowcolor{gray!20}  {in trajectories}  & {57.50}  & {70.00}   & {52.49} & {86.69} \\
    {in system}        & {60.00}  & \textcolor{red}{\textbf{75.00}}   & {57.41} & {88.10} \\
    \rowcolor{gray!20}  {in user}          & \textcolor{red}{\textbf{65.00}}  & \textcolor{red}{\textbf{75.00}}   & {62.78}  & \textcolor{red}{\textbf{89.79}} \\
    {in user w/o exp.} & {62.50}  & {72.50}   & \textcolor{red}{\textbf{71.24}}  & {84.79} \\
    \Xhline{1.2pt}
\end{tabular}
}
\label{tab:ablation_form}
\end{table}

\stitle{Detailed Hyperparameter Study of Feedback.} In Section~\ref{sec:ablation}, we examine the impact of two key hyperparameters in {\pipeline}: the maximum number of attempts and the confidence threshold. We provide the detailed results in Table~\ref{tab:detaileded_ablation}. In this section, we further investigate the role of error samples in enhancing performance. Table~\ref{tab:feedback_error_samples} summarizes the results under two settings: with and without error samples. Our findings show that, without error samples, increasing the number of attempts leads to consistent improvements in PR performance, while GR remains largely unchanged. When error samples are included, PR performance also improves; however, adding more samples does not yield further gains and, in fact, leads to a decline in GR. This suggests a trade-off: while error samples provide useful information for correction, an excessive number of them may introduce noise or bias, ultimately hindering generalization.

\begin{table*}[t]
\caption{Ablation study on confidence threshold and maximum attempts of feedback evaluation.}
\label{tab:detaileded_ablation}
\centering
\scalebox{0.83}{ \begin{tabular}{lcccccccc} 
    \Xhline{1.2pt}
     & \multicolumn{2}{c}{\textbf{TextWorld}(PR)} & \multicolumn{2}{c}{\textbf{ScienceWorld}(PR)} & \multicolumn{2}{c}{\textbf{TextWorld}(GR)} & \multicolumn{2}{c}{\textbf{ScienceWorld}(GR)} \\
    \cmidrule(lr){2-3}\cmidrule(lr){4-5} \cmidrule(lr){6-7} \cmidrule(lr){8-9}
    Hyperparameter & Qwen2.5 & Gemma3 & Qwen2.5 & Gemma3 & Qwen2.5 & Gemma3 & Qwen2.5 & Gemma3 \\
    \Xhline{1pt}
    Vanilla & 65.00	&	65.00	&	75.69	&	86.81 & 86.79	&	91.07	&	49.62	&	70.22 \\
    \rowcolor{gray!30} \multicolumn{9}{c}{\textbf{Threshold-0.6 }} \\
    + 1 Attempts 	&	65.00	&	65.00			&	75.69	&	86.81 &	86.79	&	91.07			&	49.62	&	70.22  \\
    + 2 Attempts 	&   67.50	&	72.50			&	96.81	&	94.44 &	65.05	&	93.09			&	60.59	&	71.67  \\
    + 3 Attempts 	&	65.00	&	75.00			&	82.64	&	94.44 &	62.78	&	89.79			&	57.26	&	73.55 \\ 
    + 5 Attempts 	&	62.50	&	70.00			&	82.64	&	95.92 &	58.02	&	90.56			&	57.51	&	73.71 \\ 
    + 7 Attempts 	&	70.00	&	72.50			&	86.11	&	93.75 & 64.71	&	91.71			&	59.91	&	73.47 \\ 
    \rowcolor{gray!30} \multicolumn{9}{c}{\textbf{3 Attempts}} \\
    + Threshold-0.20 & 62.50	&	67.50	&	77.78	&	96.53 &	60.10	&	90.80 &	56.59	&	75.32\\
    + Threshold-0.35 & 65.00	&	75.00 	&	88.36	&	96.53 &	58.16	&	92.46 &	57.91	&	72.91  \\
    + Threshold-0.50 & 62.50	&	77.50   &	87.50	&	93.06 &	55.16	&	92.20 &	58.82	&	73.36 \\ 
    + Threshold-0.65 & 57.50	&	75.00   &	86.11	&	94.44 &	58.31	&	89.11 &	61.28	&	74.72 \\ 
    + Threshold-0.80 & 65.00	&	72.50	&	88.19	&	86.11 &	63.36	&	92.43 &	58.74	&	74.91  \\ 
    \Xhline{1.2pt}
\end{tabular}
}
\end{table*}

\begin{table*}[t]
\caption{Ablation study on error samples of feedback evaluation.}
\label{tab:feedback_error_samples}
\centering
\scalebox{0.83}{ \begin{tabular}{lcccccccc} 
    \Xhline{1.2pt}
     & \multicolumn{2}{c}{\textbf{TextWorld}(PR)} & \multicolumn{2}{c}{\textbf{ScienceWorld}(PR)} & \multicolumn{2}{c}{\textbf{TextWorld}(GR)} & \multicolumn{2}{c}{\textbf{ScienceWorld}(GR)} \\
    \cmidrule(lr){2-3}\cmidrule(lr){4-5} \cmidrule(lr){6-7} \cmidrule(lr){8-9}
    Hyperparameter & Qwen2.5 & Gemma3 & Qwen2.5 & Gemma3 & Qwen2.5 & Gemma3 & Qwen2.5 & Gemma3 \\
    \Xhline{1pt}
    Vanilla & 65.00	&	65.00	&	75.69	&	86.81 & 86.79	&	91.07	&	49.62	&	70.22 \\
    \rowcolor{gray!30} \multicolumn{9}{c}{\textbf{Zero-Shot + Threshold-0.6 }} \\
    + 2 Attempts & 60.00	&	72.50			&	88.19	&	93.06 &	71.07	&	90.50			&	66.28	&	72.51 \\
    + 3 Attempts & 50.00	&	77.50			&	87.50	&	94.44 &	69.20	&	89.28			&	63.64	&	73.86 \\
    + 5 Attempts & 50.00	&	70.00			&	91.67	&	91.67  &	67.97	&	92.16			&	66.28	&	74.49 \\ 
    \rowcolor{gray!30} \multicolumn{9}{c}{\textbf{3 Attempts + Threshold-0.6 }} \\
    + One Shot & 60.00	&	72.50			&	94.44	&	97.22 &	71.19	&	88.55			&	65.87	&	74.20 \\
    + Three Shots     & 65.00	&	75.00			&	82.64	&	94.44 & 62.78	&	89.79			&	57.26	&	73.55 \\
    + Five Shots    & 65.00	&	77.50			&	82.64	&	93.06  & 55.82	&	92.01			&	56.21	&	73.29 \\
    \Xhline{1.2pt}
\end{tabular}
}
\end{table*}

\stitle{Noise Injection.} To evaluate the robustness of our method, we conduct ablation experiments on the feedback component under varying levels of input perturbation. Specifically, we inject random noise into the detection results at noise intensities of {0.0, 0.2, 0.4, 0.6, 0.8}. The results are presented in Figure~\ref{fig:ablation_random}. As the noise intensity increases, the model's performance gradually decreases, which is expected given that higher noise levels corrupt the quality of the feedback signal. Nevertheless, our method maintains competitive performance at moderate noise levels, demonstrating that it is not overly sensitive to minor perturbations in the feedback. This highlights the resilience of our approach and suggests that it can generalize well even when the feedback source is imperfect or noisy.

\begin{figure}[!h]
\centering
\includegraphics[width=0.9\textwidth]{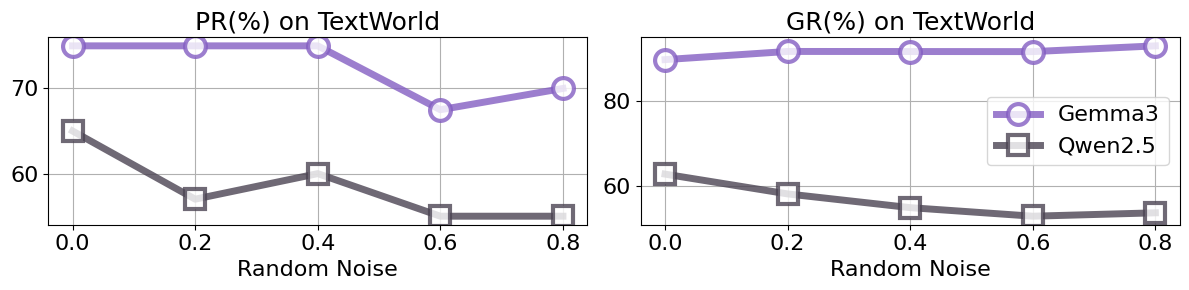}
\caption{{The pass ratio~(\%) and ground ratio~(\%) results on random noise injection.}}
\label{fig:ablation_random}
\vspace{-1.em}
\end{figure}

\subsection{{Prompts}}
{In this subsection, we provide the prompts used in experiments.}
\begin{tcolorbox}[notitle, sharp corners, breakable, colframe=Periwinkle, colback=white, 
       boxrule=3pt, boxsep=0.5pt, enhanced, 
       shadow={3pt}{-3pt}{0pt}{opacity=1,mygrey},
       title={LLM Prompt for AlfWorld, TextWorld, ScienceWorld Action Error Label},]
       \label{box:dataset_prompt_embodiedAI}
       \footnotesize
       {\fontfamily{pcr}\selectfont
            \begin{lstlisting}
"""
You are a top-level expert AI agent trajectory analyst. Your 
primary task is to analyze the 'CURRENT STEP' of an agent's 
trajectory and identify the single most likely error.

**Crucial Constraint**: The agent is unfamiliar with the 
environment, so it might take some time to explore and find 
destinations and objects. Therefore, its actions should be judged 
based on the **Overall Task Goal**, not by strictly following 
the expert plan. The expert plan is only a reference for a possible 
successful path. You can know how much the agent has inferred the 
environment from the observation and extract the best possible 
action from the observation. 

**Output Format**: Your entire response MUST be a single, valid JSON 
object. Do not include any text, explanations, or markdown 
formatting before or after the JSON object. 
The JSON object must contain exactly three keys:
- "has_error": (boolean) true if there is an error(No Error is not
an error), otherwise false.
- "error_type": (string) One of the specified error types below.
- "analysis": (string) A concise, one-sentence explanation for the 
error.For example, for a "Precondition Not Met" error, a good 
analysis would be "The agent tried to place an object that it was 
not holding in its inventory."

**Types**:
- "No Error": The action is logical and contributes to the task 
goal.
- "Precondition Not Met": The action is valid but cannot be gain 
progress because a necessary prior condition is not met (e.g., 
trying to 'put' an object that is not currently held).
- "Condition Met, Action Not Taken": A critical step was available, 
necessary to progress and the agent already observed the condition, 
but the agent performed an irrelevant or less optimal action 
instead (e.g., finding the target object but not picking it up).
- "Incorrect Target": The action is performed on the wrong object or 
at the wrong location (e.g., picking up a 'cloth' instead of the 
required 'soapbottle').
- "Repeated Action": The action is part of a sequence that was 
already performed and without any progress.

--- CONTEXT ---
1.  **Overall Task Goal**:
    "{task_goal}"
2.  **Expert's Suggested Plan (for reference only, may be flawed)**:
    {expert_plan}
3.  **Trajectory History (Recent Steps)**:
    {trajectory_history}
4.  **CURRENT STEP TO ANALYZE**:
    - Action: "{current_action}"

--- YOUR RESPONSE (JSON ONLY) ---
"""
            \end{lstlisting}
            }
\end{tcolorbox}

\begin{tcolorbox}[notitle, sharp corners, breakable, colframe=Periwinkle, colback=white, 
       boxrule=3pt, boxsep=0.5pt, enhanced, 
       shadow={3pt}{-3pt}{0pt}{opacity=1,mygrey},
       title={LLM Prompt for TravelPlanner Action Error Label},]
       \label{box:dataset_prompt_travelplanner}
       \footnotesize
       {\fontfamily{pcr}\selectfont
            \begin{lstlisting}
prompt = f"""Please analyze the following action in a travel 
planning trajectory and classify it into one of these error 
categories. Before giving a conclusion, please read the Agent 
thought(indicate the next step) and observation, reason the 
state of the agent. In general, the action should be consistent 
with the thought.

ERROR CATEGORIES:
1. No error: There is no error during the trajectory
2. Illegal Action: The action is not a valid action
3. Repeated Action: The action has been done in the trajectory, and 
there is nothing updated. It is not necessary.
4. Incorrect Target: The action is not aligned with the goal, for 
example the thought would go to place A, but the action searches 
place B.
5. Precondition Not Met: The action is valid, but it can't be done 
right now, because some condition is not met. For example, the 
current information does not contain the hotel information, so it 
is not a good time to make final plan. Or the current information 
does not have a valid plan due to the constraints, such as money.
6. Precondition Met, Action Not Taken: The information needed is 
collected. The agent can have a valid plan according to the 
information, but it did not make the final plan immediately.

TASK DESCRIPTION:
{current_step.get('task_description', 'N/A')}

TRAJECTORY CONTEXT:
{context_str}

CURRENT STEP TO ANALYZE:
Agent Thought: {current_step.get('thought', 'N/A')}
Action: {current_step.get('action', 'N/A')}

Based on the context and current step, return strict JSON with 
two keys: label and reason. The label must be exactly one of the 
categories above (e.g., "No error", "Illegal Action", etc.). The 
reason should be a brief phrase explaining why.

Attention: Only the raw json."""
            \end{lstlisting}
            }
\end{tcolorbox}

\begin{tcolorbox}[notitle, sharp corners, breakable, colframe=Periwinkle, colback=white, 
       boxrule=3pt, boxsep=0.5pt, enhanced, 
       shadow={3pt}{-3pt}{0pt}{opacity=1,mygrey},
       title={Agent Prompt for AlfWorld environment},]
       \label{box:agent_prompt_alfworld}
       \footnotesize
       {\fontfamily{pcr}\selectfont
            \begin{lstlisting}
messages = [{
"role": "system",
"content": f"""You are an agent in a text-based ALFWorld environment, 
performing a household task.\n
For each step, generate one action based on the task description, 
action Options, current observation, and the plan.\n
Please think about the environmental state from the historical 
trajectories before you give the answer.\n
Do not generate other text except the action itself. 
One action per step.\n
The action output format must be ##Action: XXX ##, 
where XXX is the action.\n\n
Example:\n> Observation: You are in the middle of a room. 
Looking quickly around you, you see a bathtubbasin 1, a cabinet 2, 
a cabinet 1, a countertop 1, a garbagecan 1, a handtowelholder 1, 
a sinkbasin 1, a toilet 1, a toiletpaperhanger 1, and 
a towelholder 1.\n
Your task is to: put a toiletpaper in toiletpaperhanger.\n
> Action: go to toiletpaperhanger 1\n
> Observation: On the toiletpaperhanger 1, you see nothing.\n
> Action: go to toilet 1\n> Observation: On the toilet 1, you see a
soapbottle 1, and a toiletpaper 1.\n
> Action: take toiletpaper 1 from toilet 1\n> Observation: You pick
up the toiletpaper 1 from the toilet 1.\n
> Action: go to toiletpaperhanger 1\n> Observation: On the 
toiletpaperhanger 1, you see nothing.\n
> Action: put toiletpaper 1 in/on toiletpaperhanger 1\n"""},
{"role": "user",
"content": f"""
> Task Description: {task_description}
> Action Options: {','.join(action_lists)}
> Trajectory History: {' '.join(history)}
> Action:"""} ]
            \end{lstlisting}
            }
\end{tcolorbox}

\begin{tcolorbox}[notitle, sharp corners, breakable, colframe=Periwinkle, colback=white, 
       boxrule=3pt, boxsep=0.5pt, enhanced, 
       shadow={3pt}{-3pt}{0pt}{opacity=1,mygrey},
       title={Agent Prompt for TextWorld environment},]
       \label{box:agent_prompt_textworld}
       \footnotesize
       {\fontfamily{pcr}\selectfont
            \begin{lstlisting}
messages = [{
"role": "system",
"content": f"""You are an agent in a text-based TextWorld 
environment, performing a task.\n
For each step, generate one action based on the task description, 
action Options, current observation, and the plan.\n
Please think the environment state from the history trajectories 
before you give the answer.\n
Do not generate other text except the action itself. One action 
per step.\n
The action output format must be ##Action: XXX ##, where XXX is the
action.\n\n
Example:\n
> Observation: You are in a room. You see a coin, a table, 
and a door to the north (open).\n
Your task is to: find and collect a coin.\n
> Action: look\n
> Observation: You see a coin.\n
> Action: take coin\n
> Observation: You take the coin.\n"""},
{"role": "user",
"content": f"""
> Task Description: {task_description}
> Action Options: {','.join(action_list)}
> Trajectory History: {' '.join(history)}
> Action:""" }]
            \end{lstlisting}
            }
\end{tcolorbox}

\begin{tcolorbox}[notitle, sharp corners, breakable, colframe=Periwinkle, colback=white, 
       boxrule=3pt, boxsep=0.5pt, enhanced, 
       shadow={3pt}{-3pt}{0pt}{opacity=1,mygrey},
       title={Agent Prompt for ScienceWorld environment},]
       \label{box:agent_prompt_scienceworld}
       \footnotesize
       {\fontfamily{pcr}\selectfont
            \begin{lstlisting}
 messages = [ {
role="system",
content=( "You are an agent in a text-based ScienceWorld 
environment, performing a task.\n"
"For each step, generate one action based on the task description, 
action Options, current observation, and the plan.\n"
"Please think the environment state from the history trajectories 
before you give the answer.\n"
"Possible Available Actions contains the OBJ, which can be replace 
by object in environment.\n"
"Do not generate other text except the action itself. One action 
per step.\n"
"The action output format must be ##Action: XXX ##, where XXX is 
the action.\n\n"
"Example:\n
> Observation: This room is called the hallway. In it, you see: a 
picture, a substance called air, the agent.\n"
"You also see: doors to other rooms.\nYour task is to: boil water.\n
> Action: look around\n
> Observation: The door is already open.\n
> Action: open door to kitchen\n" ) },
{ role="user",
content=(
f"> Task Description: {task_description}\n"
f"> Action Options: {','.join(action_lists)}\n"
f"> Trajectory History: {' '.join(history)}\n"
f"> Action:\n" ) } ]
            \end{lstlisting}
            }
\end{tcolorbox}

\begin{tcolorbox}[notitle, sharp corners, breakable, colframe=Periwinkle, colback=white, 
       boxrule=3pt, boxsep=0.5pt, enhanced, 
       shadow={3pt}{-3pt}{0pt}{opacity=1,mygrey},
       title={LLM Prompt with the Error Feedback},]
       \label{box:prompt_improvement}
       \footnotesize
       {\fontfamily{pcr}\selectfont
            \begin{lstlisting}
feedback_section = f"\n\nIMPORTANT: We have analyzed 
your last time action {action} for the same trajectory and provided
this feedback: {gnn_feedback}\n Do not generate the same action,
{action}, again.\n"

messages = [{...},
{"role": "user",
"content": f"""
> Task Description: {task_description}
> Action Options: {','.join(action_lists)}
> Trajectory History: {' '.join(history)}
> Action:
{feedback_section} """}]
            \end{lstlisting}
            }
\end{tcolorbox}
\begin{tcolorbox}[notitle, sharp corners, breakable, colframe=Periwinkle, colback=white, 
       boxrule=3pt, boxsep=0.5pt, enhanced, 
       shadow={3pt}{-3pt}{0pt}{opacity=1,mygrey},
       title={LLM-as-Judge prompt for Error Detection},]
       \label{box:prompt_llm_as_judge_detection}
       \footnotesize
       {\fontfamily{pcr}\selectfont
            \begin{lstlisting}
prompt = """You are a helpful assistant for classifying 
agent actions.Please give the label for the following 
input. Here are labels:\n

- No Error: The action is logical and contributes to the task 
goal.
- Illegal Action: The action performed an illegal action or has 
no effect.
- Precondition Not Met: The action is valid but cannot be gain 
progress because a necessary prior condition is not met (e.g., 
trying to 'put' an object that is not currently held).
- Condition Met, Action Not Taken: A critical step was available, 
necessary to progress and the agent already observed the condition, 
but the agent performed an irrelevant or less optimal action 
instead (e.g., finding the target object but not picking it up).
- Incorrect Target: The action is performed on the wrong object or 
at the wrong location (e.g., picking up a 'cloth' instead of the 
required 'soapbottle').
- Repeated Action: The action is part of a sequence that was 
already performed and without any progress.\n """

for shot in shots:
    prompt += f"Example:\nInput: {shot['input']}\n
                Label: {shot['label']}\n\n"
prompt += f"Input: {input_text}\nLabel:"

            \end{lstlisting}
            }
\end{tcolorbox}

\begin{tcolorbox}[notitle, sharp corners, breakable, colframe=Periwinkle, colback=white, 
       boxrule=3pt, boxsep=0.5pt, enhanced, 
       shadow={3pt}{-3pt}{0pt}{opacity=1,mygrey},
       title={RAG prompt for Error Detection},]
       \label{box:prompt_rag_detection}
       \footnotesize
       {\fontfamily{pcr}\selectfont
            \begin{lstlisting}
prompt = "You are a classifier. Given the following examples, 
predict the label for the new sample.\n"
for idx, (txt, lbl) in enumerate(retrieved):
    prompt += f"Example {idx+1}:\nText: {txt}\nLabel: {lbl}\n"
prompt += f"\nNow,classify this sample:\nText: {X_test[i]}\nLabel: "
            \end{lstlisting}
            }
\end{tcolorbox}

\end{document}